\documentclass[journal]{IEEEtran}

\IEEEoverridecommandlockouts

\usepackage{cite}
\usepackage{amsmath,amssymb,amsfonts}
\usepackage{algorithmicx}
\usepackage{algpseudocode}
\usepackage{graphicx}
\usepackage{textcomp}
\usepackage{xcolor}
\def\BibTeX{{\rm B\kern-.05em{\sc i\kern-.025em b}\kern-.08em
    T\kern-.1667em\lower.7ex\hbox{E}\kern-.125emX}}

\usepackage{amsmath}
\usepackage{amsfonts}
\usepackage{color}
\usepackage{soul}
\usepackage{cite}
\usepackage{graphicx}
\usepackage{algorithm}
\usepackage{mathtools}
\usepackage{bm}
\usepackage{booktabs}
\usepackage{textcomp}
\usepackage{nicematrix}

\usepackage{pgf}
\usepackage{printlen}

\usepackage{hyperref}

\newtheorem{theorem}{Theorem}
\newtheorem{corollary}{Corollary}
\newtheorem{assumption}{Assumption}
\newtheorem{lemma}{Lemma}
\newtheorem{remark}{Remark}
\newtheorem{proposition}{Proposition}
\newtheorem{defn}{Definition}

\newenvironment{proof}{\paragraph*{Proof}}{\hfill$\square$}

\DeclareMathSymbol{\shortminus}{\mathbin}{AMSa}{"39}

\DeclareMathOperator*{\argmin}{\mathrm{argmin}}

\DeclareMathOperator*{\nnz}{\mathrm{nnz}}

\newcommand{\real}{\mathbb{R}}

\algnewcommand{\LineComment}[1]{\State \(\triangleright\) #1}

\usepackage[dvipsnames]{xcolor}
\usepackage{marginnote}
\usepackage{xstring}

\newif\ifreviewermode
\reviewermodefalse

\newcommand{\revres}[2]{%
    \ifreviewermode
        \IfEqCase{#1}{%
            {1}{\textcolor{brown}}%
            {2}{\textcolor{magenta}}%
            {3}{\textcolor{olive}}%
            {4}{\textcolor{teal}}%
            {5}{\textcolor{LimeGreen}}%
            {6}{\textcolor{violet}}%
        }[\PackageError{revres}{Undefined revres option: #1}{}]{#2}%
    \else
        {#2}%
    \fi
}%
\newcommand{\revmargin}[3][false]{%
    \ifreviewermode
        \def\placemarginnote{%
            \IfEqCase{#2}{%
                {1}{\textcolor{brown}}%
                {2}{\textcolor{magenta}}%
                {3}{\textcolor{olive}}%
                {4}{\textcolor{teal}}%
                {5}{\textcolor{LimeGreen}}%
                {6}{\textcolor{violet}}%
            }[\PackageError{revmargin}{Undefined revmargin option: #2}{}]%
            {\marginnote{#3}}%
        }%
        \IfStrEq{#1}{true}{%
            {
                \reversemarginpar
                \placemarginnote
                \normalmarginpar 
            }%
        }{%
            \placemarginnote
        }%
    \else
    \fi
}

\usepackage[normalem]{ulem}

\newif\ifanonymousmode
\anonymousmodefalse

\newcommand{\anon}[2]{%
    \ifanonymousmode
        {#1}%
    \else
        {#2}%
    \fi
}%

\begin{document}

\title{Hybrid System Planning using a Mixed-Integer ADMM Heuristic and Hybrid Zonotopes}

\author{Joshua A. Robbins, Andrew F. Thompson, Jonah J. Glunt, Herschel C. Pangborn
\thanks{All authors are with the Department of Mechanical Engineering, The Pennsylvania State University, University Park, PA 16802 USA (e-mail: {\tt\small jrobbins@psu.edu, thompson@psu.edu, jglunt@psu.edu, hcpangborn@psu.edu}).}
}

\maketitle

\begin{abstract}
Embedded optimization-based planning for hybrid systems is challenging due to the use of mixed-integer programming, which is computationally intensive and often sensitive to the specific numerical formulation. To address that challenge, this article proposes a framework for motion planning of hybrid systems that pairs hybrid zonotopes---an advanced set representation---with a new alternating direction method of multipliers (ADMM) mixed-integer programming heuristic. A general treatment of piecewise affine (PWA) system reachability analysis using hybrid zonotopes is presented and extended to formulate optimal planning problems. Sets produced using the proposed identities have lower memory complexity and tighter convex relaxations than equivalent sets produced from preexisting techniques. The proposed ADMM heuristic makes efficient use of the hybrid zonotope structure. For planning problems formulated as hybrid zonotopes, the proposed heuristic achieves improved convergence rates as compared to state-of-the-art mixed-integer programming heuristics. 
The proposed methods for hybrid system planning on embedded hardware are experimentally applied in a combined behavior and motion planning scenario for autonomous driving.
\end{abstract}

\section{Introduction}

Dynamic systems with both continuous and discrete behavior are called hybrid systems. These are often used to model systems with multiple dynamic modes. 
In robotics, hybrid systems arise when there are contact dynamics, as in cases of walking robots or manipulators~\cite{johnson2016hybrid}. Hybrid systems can also appear in combined task and motion planning~\cite{garrett2021integrated} or, analogously, integrated behavior and motion planning for autonomous driving~\cite{quirynen2024real, esterle2020optimal, danielson2017constraint}.

This paper is concerned with motion planning problems for linear systems with disjoint constraints and hybrid systems where each dynamic mode is an affine system. This class of hybrid systems is equivalently described as piecewise affine (PWA) systems, mixed-logical dynamical (MLD) systems, or discrete hybrid automata (DHA)~\cite{torrisi2004hysdel}. 
PWA systems in particular are widely used in robotics, with examples including legged robots~\cite{buchan2013automatic}, skidding UGVs~\cite{benine2012piecewise}, and robotic manipulation systems~\cite{han2020local, hogan2020reactive}. 
Planning problems for PWA systems are challenging to solve, with computational efficiency very sensitive to the specific numerical formulation~\cite{marcucci2019mixed}.

\begin{figure}[t]
    \centering
    \input{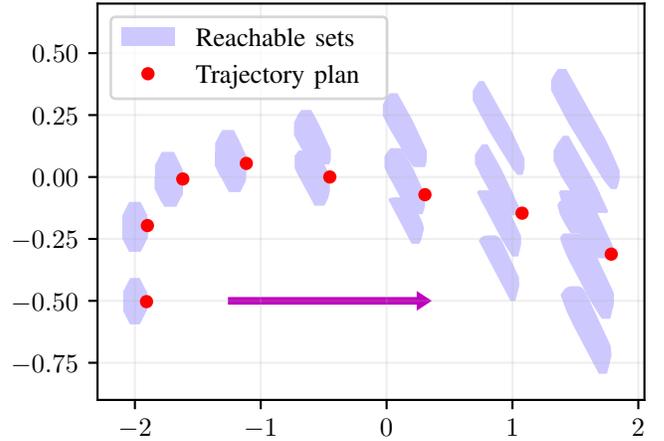}
    \caption{Hybrid zonotope reachability analysis is used to construct planning problems for PWA systems. Then an ADMM-based mixed integer programming heuristic efficiently finds feasible, and often nearly optimal, solutions.}
    \label{fig:conceptual-hybrid}
\end{figure}

\subsection{Gaps in the Literature}

Reachability analysis---i.e., rigorously quantifying the states that a dynamic system can achieve---is a powerful tool for formulating planning problems. Advantages include 
the ability to account for disturbances~\cite{seo2022real, liu2024refine}, and
favorable numerical structure and properties~\cite{robbins2025sparsity}.
Zonotopes~\cite{girard2005reachability} and constrained zonotopes~\cite{scott2016constrained} are advanced set representations widely used to compute reachable sets for linear systems. Hybrid zonotopes~\cite{bird2023hybrid} are an extension of constrained zonotopes that can be used to compute reachable sets for hybrid systems and represent non-convex constraints. A general treatment of MLD system reachability analysis using hybrid zonotopes was provided in~\cite{bird2023hybrid}. Modeling with MLD systems generally requires a modeling language like HYSDEL~\cite{torrisi2004hysdel}, \revmargin{3}{R3.1}which inhibits applicability to systems where the model is not known \textit{a priori}. This can occur in combined behavior and motion planning where the allowed behaviors are environment-dependent, and when using data-based models as in~\cite{buchan2013automatic}. Hybrid zonotope-based reachability analysis for some specific DHA systems was also considered in~\cite{siefert2024reachability}. A comprehensive treatment of hybrid zonotope-based reachability analysis for PWA systems has not yet been provided.

Hybrid zonotopes are a mixed-integer set representation, and as such, planning problems formulated using hybrid zonotope reachability analysis are mixed-integer programs (MIPs). The use of MIPs in motion planning is an active area of research. While MIPs can model non-convex and disjoint constraints, solving them is NP-hard in general.
Typically, solution approaches are based on branch-and-bound or branch-and-cut methods, often using powerful commercial solvers such as Gurobi or CPLEX~\cite{ioan2021mixed}. Problem-tailored solvers have been proposed as well~\cite{quirynen2024real, robbins2024mixed}.
See~\cite{ioan2021mixed} for a review of MIPs in motion planning.

For mixed-integer convex programs, branch-and-bound and branch-and-cut methods are guaranteed to converge to the optimal solution but can be computationally intensive, limiting utility for embedded applications. In particular, memory requirements can be prohibitive for complex problems due to queue data structure needed to store sub-problems~\cite{karamanov2006branch}. These challenges have led some to consider mixed-integer programming heuristics for embedded applications. The most widely-used mixed-integer heuristic is the feasibility pump~\cite{fischetti2005feasibility}.
This is effective at finding feasible solutions in practice, and is used as a primal heuristic in most high-performance MIP solvers~\cite{berthold2019ten}.
In contrast with branching-based methods, the feasibility pump is purely iterative and does not require any sub-problem queue, alleviating memory challenges. Because the method is heuristic, there are no convergence guarantees. A disadvantage of the feasibility pump is that each iteration requires the solution of a linear program (LP).

For embedded applications, MIP heuristics based on accelerated dual gradient projection~\cite{naik2017embedded} and the alternating direction method of multipliers (ADMM)~\cite{takapoui2020simple, alavian2017improving, liu2022distributed, diamond2016general, kanno2018alternating} have also been proposed.
Like the feasibility pump, ADMM is an iterative method and does not require a sub-problem queue. ADMM-based heuristics are simple to implement, and iterations typically have much lower computational complexity than the feasibility pump. Feasible MIP solutions with low objective function values are often found, motivating the use of ADMM heuristics as a substitute for traditional branching-based solution methods~\cite{takapoui2020simple}.
Performance can be inconsistent however, and most implementations require running the heuristic several times with different randomly generated initial iterates. For complex MIPs, this amounts to sampling in a high-dimensional space, and many samples may be required to find an initial condition such that the algorithm produces a feasible solution. This challenge is particularly significant for hybrid zonotope-based problem formulations, as the feasible set is implicitly defined as an affine mapping of a high-dimensional factor space. Despite ADMM-based heuristics having been extensively proposed for embedded applications, experimental implementations of these heuristics are lacking in the literature.

\subsection{Contributions}

This paper develops an efficient framework for formulating and solving planning problems for PWA systems on embedded hardware. 
\revmargin{3}{R3.3}\revres{3}{Example application areas include, but are not limited to, autonomous mobile robot planning and autonomous driving.}
This extends \anon{related work}{our prior work} that considered only linear systems and convex constraint sets~\cite{robbins2025sparsity} to address PWA systems and non-convex constraint sets requiring solution of MIPs. 

The key contributions of this paper are: 
(1) A general treatment of hybrid zonotope-based reachability for PWA systems is presented, and the resulting \revmargin{2}{R2.4}\revres{2}{set representations} are shown to have lower memory complexity and tighter convex relaxations than those produced using equivalent reachability calculations for MLD systems. These reachability calculations are extended to a lifted, constrained form to facilitate planning problem formulations. 
The proposed calculations exclusively use closed-form identities, facilitating online implementation for robotic systems.
(2) A novel ADMM-based MIP heuristic makes efficient use of the hybrid zonotope structure, and is applicable to any problem represented as a hybrid zonotope, including PWA system planning problems and planning problems for linear systems with disjoint constraints.
To improve the reliability of the proposed heuristic, perturbation and restart procedures inspired by the feasibility pump are employed. When compared to a state-of-the-art ADMM heuristic, numerical results show a significant increase in the number of cases where a feasible solution is produced---up to 30 times in our testing---while retaining \revmargin{2}{R2.1}\revres{2}{low computation times and memory usage}. 
(3) The proposed framework is employed for combined behavior and motion planning of an autonomous vehicle in a laboratory experiment, demonstrating its applicability to resource-limited embedded systems. 

Additionally, the appendix provides previously unpublished hybrid zonotope union identities. 
In particular, the \textit{condensed union} identity is leveraged throughout this paper to produce low-complexity hybrid zonotope planning problem formulations.

\revmargin{2}{R2.1}\revres{2}{The proposed framework is most applicable to robot planning problems where compute time and/or memory usage are constrained. Given the optimization-based formulation, nearly optimal solutions are often produced in practice. However, because the proposed heuristic lacks convergence guarantees, this framework should only be used in contexts where fall-back solutions or behaviors are available, or pre-converged trajectories that approximate a feasible solution can be used.}

\subsection{Outline}
The remainder of this paper is organized as follows. Sec.~\ref{sec:prelims} provides preliminary information. Sec.~\ref{sec:hz_reach} develops reachability calculations for PWA systems. Sec.~\ref{sec:admm-fp} proposes a novel ADMM-based MIP heuristic that uses perturbation procedures inspired by the feasibility pump. The developed hybrid zonotope reachability and ADMM heuristic framework is applied to several numerical examples in Sec.~\ref{sec:numerical-results}, and is experimentally implemented in Sec.~\ref{sec:experiment}. Sec.~\ref{sec:conclusion} concludes the paper. 
\section{Preliminaries} \label{sec:prelims}

\subsection{Notation}
Unless otherwise stated, scalars are denoted by lowercase letters, vectors by boldface lowercase letters, matrices by uppercase letters, and sets by calligraphic letters. $\mathbf{0} = \begin{bmatrix} 0 & \cdots & 0 \end{bmatrix}^T$ and $\mathbf{1} = \begin{bmatrix} 1 & \cdots & 1 \end{bmatrix}^T$ denote vectors consisting entirely of zeroes and ones, respectively, of appropriate dimensions. $I$ denotes the identity matrix. Projection of a vector $\mathbf{v}$ onto a set $\mathcal{C}$ is denoted as $\pi_{\mathcal{C}}(\mathbf{v})$. The indicator function for a set $\mathcal{C}$ is defined as $$I_{\mathcal{C}}(\mathbf{x}) = \begin{cases} {0, \; \mathbf{x} \in \mathcal{C}} \;, \\ {\infty, \; \text{otherwise}} \;. \end{cases}$$ The function $\mathrm{rand}(a,b)$ returns a random value drawn uniformly from the interval $[a,b]$. A diagonal matrix with elements $s_{ii}$ is denoted as $\mathrm{diag}([s_{11}, s_{22}, ...])$, and similarly, a block diagonal matrix with blocks $S_{ii}$ is denoted as $\mathrm{blkdiag}([S_{11}, S_{22}, ...])$. The operation returning the number of nonzero elements in a matrix is denoted as $\mathrm{nnz}(\cdot)$. An $n$-sided regular zonotope inner-approximation of a circle of radius $r$ is denoted as $\mathcal{O}(r,n) \subset \real^2$.

\subsection{Zonotopes, Constrained Zonotopes, and Hybrid Zonotopes}

A set $\mathcal{Z}$ is a zonotope if $\exists G \in \real^{n \times n_G}$ and $\mathbf{c} \in \real^n$ such that~\cite{girard2005reachability}
\begin{equation}
    \mathcal{Z} = \left\{ G \bm{\xi} + \mathbf{c} \revmargin{3}{R3.2}~\middle|~ \bm{\xi} \in [-1,1]^{n_G} \right\} \;.
\end{equation}
Zonotopes are centrally symmetric convex polytopes.

Constrained zonotopes extend the definition of zonotopes to include equality constraints. A set $\mathcal{Z}$ is a constrained zonotope if there exists a generator matrix $G \in \real^{n \times n_G}$, center $\mathbf{c} \in \real^n$, constraint matrix $A \in \real^{n_C \times n_G}$, and constraint vector $\mathbf{b} \in \real^{n_C}$ such that
\begin{equation} \label{eq:conzono-definition}
    \mathcal{Z} = \left\{ G \bm{\xi} + \mathbf{c} \revmargin{3}{R3.2}~\middle|~ A \bm{\xi} = \mathbf{b},\; \bm{\xi} \in [-1,1]^{n_G} \right\} \;.
\end{equation}
Constrained zonotopes can represent any convex polytope \cite{scott2016constrained}.

Hybrid zonotopes extend constrained zonotopes by allowing for both continuous and binary factors, $\bm{\xi}_c$ and $\bm{\xi}_b$. A set $\mathcal{Z}$ is a hybrid zonotope if there exist continuous and binary generator matrices $G_c \in \real^{n \times n_{Gc}}$ and $G_b \in \real^{n \times n_{Gb}}$, center $\mathbf{c} \in \real^n$, continuous and binary constraint matrices $A_c \in \real^{n_C \times n_{Gc}}$ and $A_b \in \real^{n_C \times n_{Gb}}$, and constraint vector $\mathbf{b} \in \real^{n_C}$ such that
\begin{equation} \label{eq:hybzono-definition}
    \mathcal{Z} = \left\{ \begin{bmatrix} G_c & G_b \end{bmatrix} \begin{bmatrix} \bm{\xi}_c \\ \bm{\xi}_b \end{bmatrix} + \mathbf{c} \revmargin{3}{R3.2}~\middle|~ 
    \begin{matrix} 
    \begin{bmatrix} A_c & A_b \end{bmatrix} \begin{bmatrix} \bm{\xi}_c \\ \bm{\xi}_b \end{bmatrix} = \mathbf{b} \\ \bm{\xi}_c \in [-1,1]^{n_{Gc}}\\ \bm{\xi}_b \in \{-1, 1\}^{n_{Gb}} 
    \end{matrix} \right\} \;.
\end{equation}
Hybrid zonotopes represent unions of polytopes~\cite{bird2023hybrid}.

In the above definitions, $n$ is the dimension of the hybrid zonotope, $n_{Gc}$ is the number of continuous generators, $n_{Gb}$ is the number of binary generators, $n_G = n_{Gc} + n_{Gb}$ is the total number of generators, and $n_C$ is the number of equality constraints. The shorthand notation $\mathcal{Z} = \left\langle G_c, G_b, \mathbf{c}, A_c, A_b, \mathbf{b} \right\rangle$ is often used in place of \eqref{eq:hybzono-definition}. Similarly, $\left\langle G, \mathbf{c}, A, \mathbf{b} \right\rangle$ denotes a constrained zonotope, and $\left\langle G, \mathbf{c} \right\rangle$ denotes a zonotope. 

It is often useful to define hybrid zonotopes such that $\bm{\xi}_c \in [0,1]^{n_{Gc}}$ and $\bm{\xi}_b \in \{0, 1\}^{n_{Gb}}$. We say that hybrid zonotopes defined in this way are in \textit{01-form}, and that hybrid zonotopes defined using~\eqref{eq:hybzono-definition} are in \textit{canonical form}. Conversions between these forms are given in~\cite{robbins2024mixed}.
Sets in 01-form use the shorthand notation $\langle\cdot\rangle_{01}$. We similarly use this notation to indicate zonotopes or constrained zonotopes defined such that $\bm{\xi} \in [0,1]^{n_G}$.

One major advantage of zonotopic sets when compared to other set representations is that they have closed form expressions for key set operations. In particular, for hybrid zonotopes,  
\begin{subequations} \label{eq:hz-set-ops}
\allowdisplaybreaks
\begin{align}
&R \mathcal{Z} + \mathbf{s} = \left\langle RG_c, R G_b, R\mathbf{c} + \mathbf{s}, A_c, A_b, \mathbf{b} \right\rangle \;, \label{eq:hz-set-ops-lin-map} \\
&\mathcal{Z}_1 \times \mathcal{Z}_2 = \left\langle \begin{bmatrix}
    G_{c1} & 0 \\
    0 & G_{c2}
\end{bmatrix}, \begin{bmatrix}
    G_{b1} & 0 \\
    0 & G_{b2}
\end{bmatrix}, \begin{bmatrix}
    \mathbf{c}_1 \\ \mathbf{c}_2
\end{bmatrix}, \right. \nonumber \\
&\qquad \qquad \qquad \quad \left. \begin{bmatrix}
    A_{c1} & 0 \\
    0 & A_{c2}
\end{bmatrix}, \begin{bmatrix}
    A_{b1} & 0 \\
    0 & A_{b2}
\end{bmatrix}
\begin{bmatrix}
    \mathbf{b}_1 \\ \mathbf{b}_2
\end{bmatrix} \right\rangle \;, \label{eq:hz-set-ops-cart-prod} \\
&\mathcal{Z}_1 \oplus \mathcal{Z}_2 = \left\langle \begin{bmatrix}
    G_{c1} & G_{c2}
\end{bmatrix}, \begin{bmatrix}
    G_{b1} & G_{b2}
\end{bmatrix}, \mathbf{c}_1 + \mathbf{c}_2, \right. \nonumber \\
&\qquad \qquad \qquad \quad \left.
\begin{bmatrix}
    A_{c1} & 0 \\
    0 & A_{c2}
\end{bmatrix}, 
\begin{bmatrix}
    A_{b1} & 0 \\
    0 & A_{b2}
\end{bmatrix}
\begin{bmatrix}
    \mathbf{b}_1 \\ \mathbf{b}_2
\end{bmatrix} \right\rangle \;, \label{eq:hz-set-ops-mink-sum} \\
&\mathcal{Z}_1 \cap_R \mathcal{Z}_2 = \left\langle \begin{bmatrix}
    G_{c1} & 0
\end{bmatrix}, 
\begin{bmatrix}
    G_{b1} & 0
\end{bmatrix},
\mathbf{c}_1, \right. \nonumber \\
&\qquad \left. \begin{bmatrix}
    A_{c1} & 0 \\
    0 & A_{c2} \\
    R G_{c1} & -G_{c2}
\end{bmatrix}, 
\begin{bmatrix}
    A_{b1} & 0 \\
    0 & A_{b2} \\
    R G_{b1} & -G_{b2}
\end{bmatrix}, 
\begin{bmatrix}
    \mathbf{b}_1 \\ \mathbf{b}_2 \\ \mathbf{c}_2 - R \mathbf{c}_1
\end{bmatrix} \right\rangle \;, \label{eq:hz-set-ops-intersection} \\
&\mathcal{Z}_1 \cap_R \mathcal{H} = \left\langle \begin{bmatrix} G_{c1} & 0 \end{bmatrix}, G_{b1}, \mathbf{c}_1, \right. \nonumber \\
    &\qquad \left. \begin{bmatrix} 
        A_{c1} & \mathbf{0} \\
        \mathbf{h}^T R G_{c1} & \frac{d_m}{2}
    \end{bmatrix}, \begin{bmatrix}
        A_{b1} \\
        \mathbf{h}^T R G_{b1}
    \end{bmatrix}, \begin{bmatrix}
        \mathbf{b}_1 \\
        f - \mathbf{h}^T R \mathbf{c} - \frac{d_m}{2}
    \end{bmatrix} \right\rangle \;, \label{eq:hz-set-ops-halfspace-intersection}
\end{align}
\end{subequations}
where \eqref{eq:hz-set-ops-lin-map} is the affine map, \eqref{eq:hz-set-ops-cart-prod} is the Cartesian product, \eqref{eq:hz-set-ops-mink-sum} is the Minkowski sum, and \eqref{eq:hz-set-ops-intersection} is the generalized intersection. \revmargin{2}{R2.3}\revres{2}{Eq.~\eqref{eq:hz-set-ops-halfspace-intersection} is the halfspace intersection, where $\mathcal{H} = \left\{ \mathbf{z} ~\middle|~ \mathbf{h}^T \mathbf{z} \leq f \right\}$ and $d_m = f - \mathbf{h}^T R \mathbf{c}_1 + \sum_{i=1}^{n_{Gc,1}} |\mathbf{h}^T R \mathbf{g}_{c,i}| + \sum_{i=1}^{n_{Gb,1}} |\mathbf{h}^T R \mathbf{g}_{b,i}|$.}

\subsection{Convex Relaxations}
The convex relaxation of a mixed-integer set representation is the set obtained by replacing any integrality constraints with their interval hulls. For a hybrid zonotope $\mathcal{Z} = \langle G_c, G_b, \mathbf{c}, A_c, A_b, \mathbf{b} \rangle$, the convex relaxation is the constrained zonotope
\begin{equation}
    \mathit{CR}(\mathcal{Z}) = \left\langle \begin{bmatrix} G_c & G_b \end{bmatrix}, \mathbf{c}, \begin{bmatrix} A_c & A_b \end{bmatrix}, \mathbf{b} \right\rangle \;.
\end{equation}

By definition, $\mathit{CR}(\mathcal{Z}) \supseteq \mathit{CH}(\mathcal{Z}) \supseteq \mathcal{Z}$, where $\mathit{CH}(\cdot)$ denotes the convex hull. Sets $\mathcal{Z}$ for which $\mathit{CR}(\mathcal{Z}) = \mathit{CH}(\mathcal{Z})$ are termed \textit{sharp}. Operations~\eqref{eq:hz-set-ops-lin-map}-\eqref{eq:hz-set-ops-mink-sum} preserve sharpness while~\eqref{eq:hz-set-ops-intersection} does not~\cite{glunt2025sharp}. Sharpness, and by extension tightness (i.e., how much larger $\mathit{CR}(\mathcal{Z})$ is than $\mathcal{Z}$), is a key property in mixed-integer optimization~\cite{kronqvist2025psplit}.

\subsection{Alternating Direction Method of Multipliers} \label{sec:admm-prelims}
ADMM solves optimization problems of the form
\begin{subequations} \label{eq:admm-general}
\begin{align}
    &\min_{\mathbf{x}, \mathbf{z}} f(\mathbf{x}) + g(\mathbf{z})\;, \\
    &\text{s.t.}\; A\mathbf{x} + B \mathbf{z} = \mathbf{c} \;, \label{eq:admm-general-eq-cons}
\end{align}
\end{subequations}
where $f : \real^n \rightarrow \real \cup \{ \infty \}$ and $g : \real^m \rightarrow \real \cup \{ \infty \}$~\cite{boyd2011distributed}.

The algorithm is defined by the iterations
\begin{subequations} \label{eq:admm-iterations-general}
\begin{align}
    &\mathbf{x}_{k+1} = \argmin_{\mathbf{x}} \left(f(\mathbf{x}) + \frac{\rho}{2} ||A \mathbf{x} + B \mathbf{z}_k - \mathbf{c} + \mathbf{u}_k||_2^2 \right) \;, \label{eq:admm-gen-a} \\
    &\mathbf{z}_{k+1} = \argmin_{\mathbf{z}} \left(g(\mathbf{z}) + \frac{\rho}{2} ||A \mathbf{x}_{k+1} + B \mathbf{z} - \mathbf{c} + \mathbf{u}_k||_2^2 \right) \;, \label{eq:admm-gen-b} \\
    &\mathbf{u}_{k+1} = \mathbf{u}_k + A \mathbf{x}_{k+1} + B \mathbf{z}_{k+1} - \mathbf{c} \;, \label{eq:admm-gen-c}
\end{align}
\end{subequations}
where $\rho>0$ is the ADMM penalty parameter which weighs satisfaction of the constraint \eqref{eq:admm-general-eq-cons}. 

The convergence of~\eqref{eq:admm-iterations-general} to feasible and optimal $\mathbf{x}^*$, $\mathbf{z}^*$, and $\mathbf{u}^*$ is guaranteed when 1) $f$ and $g$ are closed, proper, and convex, and 2) the augmented Lagrangian 
\begin{equation}
    \mathcal{L} = f(\mathbf{x}) + g(\mathbf{z}) + \frac{\rho}{2} ||A \mathbf{x} + B \mathbf{z} - \mathbf{c} + \mathbf{u}||_2^2 \;,
\end{equation}
has a saddle point~\cite{boyd2011distributed}.

\subsection{The Feasibility Pump} \label{sec:fp}
The feasibility pump as originally proposed in~\cite{fischetti2005feasibility} is a heuristic algorithm to find feasible solutions to mixed-integer linear programs (MILPs), i.e., 
\begin{subequations} \label{eq:milp}
\begin{align}
    &\min_{\mathbf{z}} \mathbf{q}^T \mathbf{z} \;, \\
    &\text{s.t.}\; C \mathbf{z} \leq \mathbf{d},\; z_j \in \{0,1\} \; \forall j \in \mathcal{I} \;,
\end{align}
\end{subequations}
where $\mathcal{I}$ is the set of integer variable indices. For clarity of exposition, we assume that all integer variables are binary.

The basic version of the feasibility pump is given in Algorithm~\ref{alg:fp}. For convenience, we define the sets $\mathcal{C}$ and $\mathcal{R}$ to be
\begin{subequations} \label{eq:fp-sets}
\begin{align}
    &\mathcal{C} = \{\mathbf{z} \revmargin{3}{R3.2}~|~ C \mathbf{z} \leq \mathbf{d} \} \;, \\
    &\mathcal{R} = \left\{ \mathbf{z} \revmargin{3}{R3.2}~\middle|~ z_j \in \begin{cases}
        \{0, 1\}, &j\in \mathcal{I} \\
       (-\infty, \infty),& \text{otherwise} 
    \end{cases} \right\} \;.
\end{align}
\end{subequations}

The feasibility pump works by driving two sequences of iterates, $\mathbf{z}^* \in \mathcal{C}$ and $\tilde{\mathbf{z}} \in \mathcal{R}$, towards each other. To do this, the feasibility pump repeatedly solves the LP
\begin{subequations} \label{eq:fp_delta}
\begin{align}
    &\min_{\mathbf{z}} \mathbf{s}^T \mathbf{z} \;, \label{eq:fp_delta_obj} \\
    &\text{s.t.}\; C \mathbf{z} \leq \mathbf{d} \;,
\end{align}
\end{subequations}
where the elements of $\mathbf{s}$ are defined as
\begin{equation}
    s_j = \begin{cases}
        \hphantom{-}1, & j \in \mathcal{I},\; \tilde{z}_j=1 \;, \\
        -1, & j \in \mathcal{I},\; \tilde{z}_j=0 \;, \\
        \hphantom{-}0, & j \notin \mathcal{I} \;.
    \end{cases}
\end{equation}
Eq.~\eqref{eq:fp_delta} returns the nearest point $\mathbf{z}$ inside the polyhedron $\mathcal{C}$ to the binary elements of the binary feasible point $\tilde{\mathbf{z}}$.

Because the feasibility pump can cycle, i.e., revisit the same points $\tilde{\mathbf{z}}$, two different random perturbation schemes are implemented. For cycles of length one, the elements of $\tilde{z}_j,\; j \in \mathcal{I}$ for which $z^*_j$ is most fractional are flipped. The number of elements to flip is controlled by the parameter $t$. For longer cycles, a restart procedure is implemented where binary variables $\tilde{z}_j,\; j \in \mathcal{I}$ are flipped with probability dependent on the fractionality of $z^*_j$.

\begin{algorithm}
    \caption{The Feasibility Pump~\cite{fischetti2005feasibility}\\
    Inputs: $\mathbf{q}$, $\mathcal{C}$, $\mathcal{R}$, $t$ \\
    Ouputs: $\mathbf{z}^* \in \mathcal{C} \cap \mathcal{R}$}
    \begin{algorithmic}[1]
        \State $\mathbf{z}^* \gets$ solution to convex relaxation of~\eqref{eq:milp}
        \State \textbf{if} $\mathbf{z}^* \in \mathcal{R}$, \Return $\mathbf{z}^*$
        \State $\tilde{\mathbf{z}} \gets \pi_{\mathcal{R}}(\mathbf{z}^*)$
        \While{True}
            \State $\mathbf{z}^* \gets$ solution to~\eqref{eq:fp_delta}
            \State \textbf{if} $\mathbf{z}^* \in \mathcal{R}$, \Return $\mathbf{z}^*$
            \If{cycle detected}
                \LineComment{restart procedure} \label{alg:line:fp-restart}
                \For{$j \in \mathcal{I}$}
                    \State $\rho_j \gets \mathrm{rand}(-0.3, 0.7)$
                    \If{$|z_j^* - \tilde{z}_j| + \max\{\rho_j, 0 \} > 0.5$}
                        \State $\tilde{z}_j \gets 0$ if $\tilde{z}_j=1$, $\tilde{z}_j \gets 1$ if $\tilde{z}_j = 0$
                    \EndIf
                \EndFor
            \Else
                \If{$\exists j \in \mathcal{I}$ s.t. $(\pi_{\mathcal{R}}(\mathbf{z}^*))_j \neq \tilde{z}_j$}
                    \State $\tilde{\mathbf{z}} \gets \pi_{\mathcal{R}}(\mathbf{z}^*)$
                \Else
                    \LineComment{perturbation procedure} \label{alg:line:fp-perturb}
                    \State $t_r \gets \mathrm{rand}(t/2, 3t/2)$
                    \State $\mathcal{I}_F \gets t_r$ indices $j \in \mathcal{I}$ with highest $|z^*_j - \tilde{z}_j|$ 
                    \State $\forall j \in \mathcal{I}_F:$ $\tilde{z}_j \gets 0$ if $\tilde{z}_j=1$, $\tilde{z}_j \gets 1$ if $\tilde{z}_j = 0$
                \EndIf
            \EndIf
        \EndWhile
    \end{algorithmic}
    \label{alg:fp}
\end{algorithm}

The objective feasibility pump (OFP)~\cite{achterberg2007improving} is a popular modification to the feasibility pump that uses the original problem objective during its search for a feasible solution in order to improve the optimality of found solutions. Specifically, Eq.~\eqref{eq:fp_delta_obj} is replaced with
\begin{equation} \label{eq:ofp_delta}
    \min_{\mathbf{z}} \left((1-\alpha) \mathbf{s} + \alpha \frac{||\mathbf{s}||_2}{||\mathbf{q}||_2} \mathbf{q} \right)^T \mathbf{z} \;,
\end{equation}
where the parameter $\alpha$ is a weighting factor that is initialized to $\alpha_0$ and decremented by a constant factor $\phi$ at every iteration. Thus, the OFP prioritizes the original problem objective during early iterations and feasibility during later iterations. Cycle detection in the OFP requires that, at iterations $k$ and $l$, $\tilde{\mathbf{z}}_k = \tilde{\mathbf{z}}_l$ within numerical tolerance, and also that $|\alpha_k - \alpha_l| \leq \delta_{\alpha}$ where $\delta_{\alpha}$ is a user-specified parameter.

\section{Reachability of Piecewise Affine Systems with Hybrid Zonotopes} \label{sec:hz_reach}

This section shows how reachability analysis with hybrid zonotopes can be used to construct efficient planning problem formulations for hybrid systems \revmargin{2}{R2.10}\revres{2}{online}.
Reachability calculations for PWA systems are presented and shown to achieve reduced memory complexity and tighter convex relaxations as compared to state-of-the-art methods for MLD systems. Extensions to lifted problem formulations with state constraints are provided, enabling the formulation of constrained planning problems for hybrid systems. 

\subsection{Problem Statement}

This section addresses optimal planning for PWA systems given by
\begin{subequations} \label{eq:piecewise-affine-system}
\begin{align}
    &\mathbf{x}_{k+1} = A^i_k \mathbf{x}_k + B^i_k \mathbf{u}_k + \mathbf{f}^i_k \;, \label{eq:pwa-system-dyn} \\
    &\begin{bmatrix}
        \mathbf{x}_k \\ \mathbf{u}_k
    \end{bmatrix} \in \revmargin{2}{R2.7}\revres{2}{\mathcal{D}}^i_k \;, \label{eq:pwa-system-mode}
\end{align}
\end{subequations}
where $i \in \{1, ..., p\}$ is the index of a given dynamic mode, and $\revmargin{2}{R2.7}\revres{2}{\mathcal{D}}^i_k \subseteq \real^{n_x+n_u}$ is the corresponding domain~\cite{borrelli2017predictive}. In words, the active mode of the PWA dynamics~\eqref{eq:pwa-system-dyn} depends on the domain~\eqref{eq:pwa-system-mode} to which the states and inputs of the system belong.
\begin{assumption} \label{ass:union_of_SU}
    The state and input domain for system~\eqref{eq:piecewise-affine-system} over all modes is separable into state and input components, i.e.,
    $\bigcup_{i \in \{1, ..., p\}} \revmargin{2}{R2.7}\revres{2}{\mathcal{D}}^i_k = \mathcal{S}_k \times \mathcal{U}_k$.
\end{assumption}

Defining planning horizon $N$, quadratic cost matrices $Q, R, Q_N \geq 0$, reference trajectory $\mathbf{x}^r_k$, initial condition set $\mathcal{X}_0$, and time-varying state constraint set $\mathcal{F}_k \subseteq \mathcal{S}_k$, the optimal planning problem is stated as
\begin{subequations} \label{eq:planning-problem-nonlifted}
\begin{align}
    &\revmargin{3}{R3.5}\revres{3}{\min_{\{\mathbf{x}_0, ..., \mathbf{x}_{N}\}, \{\mathbf{u}_0, ... \mathbf{u}_{N-1}\}}} \frac{1}{2} \sum_{k=0}^{N-1} \left((\mathbf{x}_k-\mathbf{x}^r_k)^T Q (\mathbf{x}_k-\mathbf{x}^r_k) \right. \nonumber \\
    &\qquad \left. + \mathbf{u}_k^T R \mathbf{u}_k \right) + \frac{1}{2}(\mathbf{x}_N-\mathbf{x}^r_N)^T Q_N (\mathbf{x}_N-\mathbf{x}^r_N)\;, \\
    &\text{s.t.}~\forall k \in \{0,...,N-1\}:\mathbf{x}_0 \in \mathcal{X}_0,\; \mathbf{x}_{k+1} \in \mathcal{F}_{k+1}\;, \\
    &\hphantom{\text{s.t.}}~\mathbf{x}_k, \mathbf{u}_k, \mathbf{x}_{k+1}~\text{satisfy~\eqref{eq:piecewise-affine-system}} \;.
\end{align}
\end{subequations}
Eq.~\eqref{eq:planning-problem-nonlifted} requires planned trajectories to respect the PWA system dynamics while adhering to state constraints.

\begin{assumption}
    The initial condition set belongs to the state domain, i.e., $\mathcal{X}_0 \subseteq \mathcal{S}_0$.
\end{assumption}
Typically, the initial condition is a point such that $\mathcal{X}_0 = \{\mathbf{x}_0\}$.

\begin{assumption}
    The domain sets $\mathcal{S}_k$, $\mathcal{U}_k$, $\revmargin{2}{R2.7}\revres{2}{\mathcal{D}}_k^i$, constraint sets $\mathcal{F}_{k+1}$, and initial condition set $\mathcal{X}_0$ are polytopes or unions of polytopes $\forall k \in \{0, ..., N-1\}$ and $\forall i \in \{1,...,p\}$.
\end{assumption}

\subsection{Lifted Planning Problem}

We will seek to construct a lifted set representing the feasible space of the planning problem~\eqref{eq:planning-problem-nonlifted}. First, the following definitions are necessary:
\begin{defn} \label{def:successor-set}
    For a dynamic system $\mathbf{x}_{k+1} = \mathbf{f}(\mathbf{x}_k, \mathbf{u}_k)$ with $\mathbf{x}_k \in \mathcal{X}_k$ and $\mathbf{u}_k \in \mathcal{U}_k$, the successor set of $\mathcal{X}_k$ is
    \begin{equation}
        \mathrm{Suc}(\mathcal{X}_k, \mathcal{U}_k) = \left\{ \mathbf{f}(\mathbf{x}_k, \mathbf{u}_k) \revmargin{3}{R3.2}~\middle|~ \mathbf{x}_k \in \mathcal{X}_k,\; \mathbf{u}_k \in \mathcal{U}_k \right\} \;.
    \end{equation}
\end{defn}
\begin{defn} \label{def:reachable-set}
    Given an initial set $\mathcal{X}_0$ and input sets $\mathcal{U}_k,\; \forall k \in \{1, ..., N-1\}$, the $N$-step forward reachable set is defined by the recursion
    \begin{equation}
        \mathcal{X}_k = \mathrm{Suc}(\mathcal{X}_{k-1}, \mathcal{U}_{k-1}),\; k \in \{1, ..., N\} \;.
    \end{equation}
\end{defn}

The lifted set $\mathcal{Z}_N$, defined as follows, represents the feasible space of~\eqref{eq:planning-problem-nonlifted}.
\begin{defn} \label{def:Z_def}
    The set $\mathcal{Z}_N$ is defined such that any point $\mathbf{z} \in \mathcal{Z}_N$ satisfies
    \begin{subequations} 
    \begin{align}
        &\mathbf{z} = \begin{bmatrix}
             \mathbf{x}_0^T & \mathbf{u}_0^T & \cdots & \mathbf{u}_{N-1}^T & \mathbf{x}_N^T
        \end{bmatrix}^T \;, \label{eq:Z_def_a} \\
        &\mathbf{x}_0 \in \mathcal{X}_0 \;, \label{eq:Z_def_b} \\
        &\forall k \in \{0,..,N-1\}: \nonumber \\
        &\hphantom{\forall k} \mathbf{x}_k \in \mathcal{S}_k,\; \mathbf{u}_k \in \mathcal{U}_k,\; \mathbf{x}_{k+1} \in \mathrm{Suc}(\{\mathbf{x}_k\}, \{\mathbf{u}_k\}) \cap \mathcal{F}_{k+1} \;. \label{eq:Z_def_c}
    \end{align}
    \end{subequations}
\end{defn}

Neglecting constant terms, the lifted cost function and associated optimization problem are given as
\begin{subequations} \label{eq:optim_prob_gen}
\begin{align}
    &\min_{\mathbf{z}} \frac{1}{2} \mathbf{z}^T P \mathbf{z} + \mathbf{q}^T \mathbf{z} \;, \\
    &\mathrm{s.t.}\; \mathbf{z} \in \mathcal{Z}_N \;,
\end{align}
\end{subequations}
where 
\begin{subequations}
\begin{align}
    &P = \mathrm{blkdiag}([Q, R, Q, ..., Q_N]) \;, \\
    &\mathbf{q} = \begin{bmatrix} \mathbf{0}^T & \mathbf{0}^T & -(Q \mathbf{x}^r_1)^T & \cdots & -(Q_N \mathbf{x}_N^r)^T \end{bmatrix}^T \;.
\end{align}
\end{subequations}

\subsection{Graphs of Functions}
We leverage graphs of functions
to construct $\mathcal{Z}_N$ for~\eqref{eq:optim_prob_gen} via reachability analysis with hybrid zonotopes.
Hybrid zonotopes were used to compute reachable sets for MLD systems in~\cite{bird2023hybrid}, and were used with graphs of functions to over-approximate reachable sets of nonlinear systems in~\cite{siefert2025reachability}. Outside of a hybrid zonotope context, graphs of functions were used for PWA system control in~\cite{marcucci2019mixed}.

\begin{defn} \label{def:graph-of-func}
The graph of the function $\psi(\cdot)\revmargin[true]{3}{R3.6}\revres{3}{~:~\mathcal{D} \rightarrow \real^n}$ is the set 
\begin{equation}
    \Psi = \left\{ \begin{bmatrix}
        \mathbf{p}^T & \mathbf{q}^T
    \end{bmatrix}^T~\middle|~ \mathbf{p} \in \mathcal{D}, \mathbf{q} \revmargin{3}{R3.7}\revres{3}{=} \psi(\mathbf{p}) \right\} \;,
\end{equation}
where $\mathcal{D}$ is the domain set of the inputs $\mathbf{p}$.
\end{defn}

Assuming input domain $\begin{bmatrix} \mathbf{x}_k^T & \mathbf{u}_k^T \end{bmatrix}^T \in \mathcal{S}_k \times \mathcal{U}_k$,
the graph of the function $\mathbf{f}(\mathbf{x}_k, \mathbf{u}_k)$ is then
\begin{equation} \label{eq:gof-def-general-system}
    \Psi_k = \left\{ \begin{bmatrix}
        \mathbf{x}_k \\ \mathbf{u}_k \\ \mathbf{x}_{k+1}
    \end{bmatrix} \revmargin{3}{R3.2}~\middle|~ 
    \begin{aligned}
        &\begin{bmatrix}
        \mathbf{x}_k \\ \mathbf{u}_k \end{bmatrix} \in \revmargin{2}{R2.7}\revres{2}{\mathcal{D}}_k \;, \\
        &\mathbf{x}_{k+1} \in \mathrm{Suc}(\{\mathbf{x}_k\}, \{\mathbf{u}_k\})
    \end{aligned} \right\} \;.
\end{equation}

For system~\eqref{eq:piecewise-affine-system}, the graph of the function at timestep $k$ is
\begin{equation} \label{eq:pwa_graph_of_function}
    \Psi_k = \left\{ \begin{bmatrix}
        \mathbf{x}_k \\ \mathbf{u}_k \\ \mathbf{x}_{k+1}
    \end{bmatrix} \revmargin{3}{R3.2}~\middle|~ 
    \begin{aligned}
        &\begin{bmatrix}
        \mathbf{x}_k \\ \mathbf{u}_k \end{bmatrix} \in \mathcal{S}_k \times \mathcal{U}_k \\
        &\mathbf{x}_{k+1} = \begin{aligned}
            &A^i_k \mathbf{x}_k + B^i_k \mathbf{u}_k + \mathbf{f}^i_k \;, \\
            &\hphantom{A^i_k \mathbf{x}_k}[\mathbf{x}^T \; \mathbf{u}_k^T]^T \in \revmargin{2}{R2.7}\revres{2}{\mathcal{D}}^i_k
        \end{aligned}
    \end{aligned} 
    \right\} \;.
\end{equation}

To construct $\Psi_k$, we first compute the graph of the function $\Psi_k^i$ for each mode $i$ in the system as
\begin{subequations} \label{eq:pwa_single_mode_GOF}
\begin{align} 
    \Psi^i_k &= \left\{ \begin{bmatrix}
        \mathbf{x}_k \\ \mathbf{u}_k \\ \mathbf{x}_{k+1}
    \end{bmatrix} \revmargin{3}{R3.2}~\middle|~ 
    \begin{aligned}
        &\begin{bmatrix}
        \mathbf{x}_k \\ \mathbf{u}_k \end{bmatrix} \in \revmargin{2}{R2.7}\revres{2}{\mathcal{D}}^i_k \\
        &\mathbf{x}_{k+1} = A^i_k \mathbf{x}_k + B^i_k \mathbf{u}_k + \mathbf{f}^i_k
    \end{aligned} 
    \right\} \;, \\
     &= \begin{bmatrix}
        I & 0 \\
        0 & I \\
        A^i_k & B^i_k
    \end{bmatrix} \revmargin{2}{R2.7}\revres{2}{\mathcal{D}}^i_k \oplus 
    \left\{\begin{bmatrix} \mathbf{0} \\ \mathbf{0} \\ \mathbf{f}^i_k \end{bmatrix}\right\} \;. \label{eq:pwa_single_mode_GOF_implementation}
\end{align}
\end{subequations}

Eq.~\eqref{eq:pwa_single_mode_GOF_implementation} is readily implemented using hybrid zonotopes via~\eqref{eq:hz-set-ops-lin-map} and \eqref{eq:hz-set-ops-mink-sum}. 
\revmargin{2}{R2.8}\revres{2}{The graph of the function for~\eqref{eq:piecewise-affine-system} is constructed by taking the union of the graphs of functions for each mode, i.e.,
\begin{equation} \label{eq:union_of_gofs}
    \Psi_k = \bigcup_{i \in \{1, ..., p\}} \Psi^i_k \;.
\end{equation}}

\begin{remark}
Implementing \eqref{eq:union_of_gofs} with hybrid zonotopes requires a hybrid zonotope union operation. The appendix provides three different union identities: 1) A sharp union, 2) a condensed union, and 3) a zonotope union. The sharp union ensures that, if the $\Psi_k^i$ are sharp, then $\Psi_k$ is sharp. \revmargin{2}{R2.9}\revres{2}{Note that constrained zonotopes are trivially sharp, as they are hybrid zonotopes with no binary variables.} The condensed union is a lower memory complexity union operation that does not preserve sharpness in general. The zonotope union applies specifically to the case that all $\Psi^i_k$ are zonotopes and is most efficient when the $\Psi_k^i$ zonotopes share common generators. The zonotope union produces a sharp $\Psi_k$. 
See~\cite{glunt2025sharp} for more information about sharp hybrid zonotopes.
\end{remark}

\subsection{Reachability Analysis Using Graphs of Functions}
When performing reachability analysis with graphs of functions, it is useful to define state and input bounds $\overline{\mathcal{S}}$ and $\overline{\mathcal{U}}$ such that $\forall k:\; \overline{\mathcal{S}} \supseteq \mathcal{S}_k,\; \overline{\mathcal{U}} \supseteq \mathcal{U}_k$. These sets are generally chosen to have low representational complexity; in Secs.~\ref{sec:numerical-results} and \ref{sec:experiment}, we choose $\overline{\mathcal{S}}$ and $\overline{\mathcal{U}}$ to be boxes represented as zonotopes. 

Given an initial set $\mathcal{X}_0$ for the state of system~\eqref{eq:piecewise-affine-system}, the reachable sets at time step $k$ are given by the recursion~\cite{siefert2025reachability}
\begin{equation} \label{eq:gof_recursion_orig}
    \mathcal{X}_{k+1} = \begin{bmatrix} 0 & 0 & I \end{bmatrix} \left( \Psi_k \cap_{\begin{bmatrix} I & 0 & 0 \\ 0 & I & 0 \end{bmatrix}} (\mathcal{X}_k \times \overline{\mathcal{U}}) \right) \;.
\end{equation}

Eq.~\eqref{eq:gof_recursion_orig} works well when we are only interested in $\mathcal{X}_N$, but cannot readily be extended to construction of the lifted set $\mathcal{Z}_N$ defined in Def.~\ref{def:Z_def}. To address this challenge, Lemma~\ref{lemma:gof_recursion_lifted} and Thm.~\ref{thm:gof_recursion_lifted} give a modified recursion for lifted reachability analysis using graphs of functions. 

To enforce state constraints, we define the state-constrained graph of function
\begin{equation} \label{eq:gof-constrained}
    \tilde{\Psi}_k = \Psi_k \cap_{\begin{bmatrix} 0 & 0 & I \end{bmatrix}} \mathcal{F}_{k+1} \;.
\end{equation}

\begin{lemma} \label{lemma:gof_recursion_lifted}
    Given $\mathcal{Z}_k$ that satisfies Def.~\ref{def:Z_def} and state-constrained graph of function $\tilde{\Psi}_k$ at time step $k$, the set 
    \begin{equation} \label{eq:lifted-gof-recursion}
        \mathcal{Z}_{k+1} = (\mathcal{Z}_k \times \overline{\mathcal{U}} \times \overline{\mathcal{S}}) \cap_{\begin{bmatrix}
                0 & \dots & 0 & I_{n_x+n_u+n_x}
            \end{bmatrix}} \tilde{\Psi}_k \;,
    \end{equation}  
    satisfies Def.~\ref{def:Z_def}.
    \begin{proof}
        Define the set $\mathcal{Z}'_k = \mathcal{Z}_k \times \overline{\mathcal{U}} \times \overline{\mathcal{S}}$. For any vector $\mathbf{z}'_k \in \mathcal{Z}'_k$, the segments $\mathbf{x}_k'$, $\mathbf{u}_k'$, and $\mathbf{x}'_{k+1}$ defined as in Def.~\ref{def:Z_def} satisfy $\mathbf{x}'_k \in \mathcal{X}_k$, $\mathbf{u}'_k \in \overline{\mathcal{U}}$, and $\mathbf{x}'_{k+1} \in \overline{\mathcal{S}}$ by construction. From~\eqref{eq:gof-def-general-system}, the corresponding elements of $\mathcal{Z}_{k+1}$ satisfy the inclusions $\mathbf{x}_k \in \mathcal{X}_k \cap \mathcal{S}_k \subseteq \mathcal{S}_k$, $\mathbf{u}_k \in \overline{\mathcal{U}} \cap \mathcal{U}_k = \mathcal{U}_k$, and $\mathbf{x}_{k+1} \in (\overline{\mathcal{S}} \cap \mathcal{S}_{k+1}) \cap \mathrm{Suc}(\{\mathbf{x}_k \}, \{\mathbf{u}_k \}) = \mathcal{S}_{k+1} \cap \mathrm{Suc}(\{\mathbf{x}_k \}, \{\mathbf{u}_k \})$. Since generalized intersection cannot expand the set, \eqref{eq:Z_def_b} holds, \eqref{eq:Z_def_c} holds for all $k' < k$, and $\mathcal{Z}_{k+1}$ satisfies Def.~\ref{def:Z_def}.
    \end{proof}
\end{lemma}

\begin{theorem} \label{thm:gof_recursion_lifted}
    Given a discrete time system with state-constrained graphs of functions $\tilde{\Psi}_k$ and initial state set $\mathcal{X}_0$,
    the recursion given in~\eqref{eq:lifted-gof-recursion} 
    with $\mathcal{Z}_0 = \mathcal{X}_0$ produces a lifted reachable set $\mathcal{Z}_N$ satisfying Def.~\ref{def:Z_def}.
    \begin{proof}
        Using the fact that $\mathcal{Z}_0 = \mathcal{X}_0$ satisfies Def.~\ref{def:Z_def} for $N=0$, applying Lemma~\ref{lemma:gof_recursion_lifted} recursively gives $\mathcal{Z}_N$ satisfying Def.~\ref{def:Z_def}.
    \end{proof}
\end{theorem}

Using Thm.~\ref{thm:gof_recursion_lifted}, Algorithm~\ref{alg:online_planning_problem_formulation} gives a methodology 
\revmargin[true]{2}{R2.10}\revres{2}{that enables efficient online formulation of planning problems for PWA systems.}
The graphs of functions $\Psi_k$ used in Algorithm~\ref{alg:online_planning_problem_formulation} are computed online using \eqref{eq:union_of_gofs}. 

\begin{algorithm}
    \caption{Build hybrid system planning problem~\eqref{eq:optim_prob_gen} using hybrid zonotope reachability analysis \\
    Inputs: $\mathcal{X}_0$, $\tilde{\Psi}_k$, $\mathbf{x}^r_k$, $Q$, $R$, $Q_N$ \\
    Outputs: $\mathcal{Z}_N$, $P$, $\mathbf{q}$}
    \begin{algorithmic}[1]
        \State $\mathcal{Z}_0 \gets \mathcal{X}_0$
        \State $P \gets Q$
        \State $\mathbf{q} \gets \mathbf{0}$ 
        \For{$k \in \{1, ..., N\}$}
            \State $\mathcal{Z}_{k+1} \gets (\mathcal{Z}_k \times \overline{\mathcal{U}} \times \overline{\mathcal{S}}) \cap_{\begin{bmatrix}
                0 & \dots & 0 & I
            \end{bmatrix}} \tilde{\Psi}_k$
            \If{$k = N$}
                \State $P \gets \mathrm{blkdiag}\left( \begin{bmatrix} P & R & Q_N \end{bmatrix} \right)$
                \State \State $\mathbf{q} \gets \begin{bmatrix} \mathbf{q}^T & \mathbf{0}^T & -(Q_N \mathbf{x}^r_k)^T \end{bmatrix}^T$
            \Else{}
                \State $P \gets \mathrm{blkdiag}\left( \begin{bmatrix} P & R & Q  \end{bmatrix} \right)$
                \State \State $\mathbf{q} \gets \begin{bmatrix} \mathbf{q}^T & \mathbf{0}^T & -(Q\mathbf{x}^r_k)^T \end{bmatrix}^T$
            \EndIf            
        \EndFor
        \State \Return $\left( \mathcal{Z}_N, P, \mathbf{q} \right)$
    \end{algorithmic}
    \label{alg:online_planning_problem_formulation}
\end{algorithm}

\subsection{Convex Relaxation Tightness of Reachable Sets}
\revmargin{2}{R2.3}\revres{2}{Next, we show that the proposed PWA system reachability calculations result in hybrid zonotopes with tighter convex relaxations than existing reachability calculations for hybrid systems. For simplicity of exposition, we assume in this subsection that $\mathcal{U}$ and $\Psi$ are time-invariant; however, the results are readily extended to cases where they are time-varying.}

\revres{2}{Assume a reachable set recursion of the form
\begin{equation} \label{eq:reachable-set-recursion-form}
    \mathcal{X}_{k+1} = \left\{ \mathbf{x}_{k+1}~\middle|~\mathbf{x}_k \in \mathcal{X}_k,\; \mathbf{u}_k \in \mathcal{U},\; \begin{bmatrix} \mathbf{x}_k \\ \mathbf{u}_k \\ \mathbf{x}_{k+1} \end{bmatrix} \in \Psi \right\} \;.
\end{equation}
}

\revres{2}{\begin{lemma} \label{lemma:containment-preserved}
    For recursions $\mathcal{X}_{k+1}^A$ and $\mathcal{X}_{k+1}^B$ satisfying~\eqref{eq:reachable-set-recursion-form}, if $\mathcal{X}_k^A \subseteq \mathcal{X}_k^B$ and $\Psi^A \subseteq \Psi^B$, then $\mathcal{X}_{k+1}^A \subseteq \mathcal{X}_{k+1}^B$.
    \begin{proof}
        $\mathbf{x}_k \in \mathcal{X}_k^B~\forall\mathbf{x}_k \in \mathcal{X}_k^A$, and $[ \mathbf{x}_k^T~\mathbf{u}_k^T~\mathbf{x}_{k+1}^T]^T \in \Psi^B~\forall [ \mathbf{x}_k^T~\mathbf{u}_k^T~\mathbf{x}_{k+1}^T]^T \in \Psi^A$.
    \end{proof}
\end{lemma}}

\revres{2}{\begin{lemma} \label{lemma:convex-relaxation-commutes}
    The convex relaxation commutes with all identities in~\eqref{eq:hz-set-ops}.
    \begin{proof}
        From~\cite[Lemma 1]{glunt2025sharp}, $\mathrm{CR}(\mathcal{Z}_1 \cap \mathcal{Z}_2) = \mathrm{CR}(\mathcal{Z}_1) \cap \mathrm{CR}(\mathcal{Z}_2)$ by inspection of the intersection identity. Proofs for all other identities follow in kind.
    \end{proof}
\end{lemma}}

\revres{2}{\begin{theorem} \label{thm:pwa-reachability-tightness}
    Given reachable set recursions $\mathcal{X}_{k+1}^A$ and $\mathcal{X}_{k+1}^B$ satisfying~\eqref{eq:reachable-set-recursion-form}, if
    $\mathcal{X}_0^{A} = \mathcal{X}_0^{B}$, $\Psi^{A} = \Psi^{B}$, and $\Psi^{A}$ is computed using the sharp union~\eqref{eq:sharp_union_identity}, then $\forall k:~\mathrm{CR}(\mathcal{X}_{k+1}^{A}) \subseteq \mathrm{CR}(\mathcal{X}_{k+1}^{\mathrm{B}})$.
    \begin{proof}
        $\mathrm{CR}(\Psi^{\mathrm{A}}) = \mathrm{CH}(\Psi^{\mathrm{A}})$ using~\eqref{eq:sharp_union_identity}, so $\mathrm{CR}(\Psi^{\mathrm{A}}) \subseteq \mathrm{CR}(\Psi^{\mathrm{B}})$. From Lemmas~\ref{lemma:containment-preserved} and \ref{lemma:convex-relaxation-commutes}, $\mathrm{CR}(\mathcal{X}_{k+1}^{A}) \subseteq \mathrm{CR}(\mathcal{X}_{k+1}^{\mathrm{B}})$.
    \end{proof}
\end{theorem}}

\revres{2}{Thm.~\ref{thm:pwa-reachability-tightness} states that the proposed PWA system reachability calculations, when using the sharp union identity, produce hybrid zonotopes with tighter convex relaxations than any other equivalent reachability calculations satisfying~\eqref{eq:reachable-set-recursion-form}.}

\revres{2}{
MLD systems propogate according to
\begin{subequations}
\begin{align}
    &\mathbf{x}_{k+1} = A_x \mathbf{x}_k + B_u \mathbf{u}_k + B_w \mathbf{w}_k + \mathbf{b}_{\mathrm{aff}} \;, \\
    &\mathrm{s.t.}~E_x \mathbf{x}_k + E_u \mathbf{u}_k + E_w \mathbf{w}_k \leq \mathbf{e}_{\mathrm{aff}} \;,
\end{align}
\end{subequations}
where $\mathbf{x}_k$, $\mathbf{u}_k$, and $\mathbf{w}_k$ may contain both continuous and binary variables. In~\cite{bird2023hybrid}, hybrid zonotopes are shown to exactly compute reachable sets for MLD systems using the recursion
\begin{equation} \label{eq:mld-recursion}
    \mathcal{X}_{k+1} = \begin{bmatrix} I & 0 \end{bmatrix} \left[ \left( \begin{bmatrix} A_x \\ E_x \end{bmatrix} \mathcal{X}_k \oplus \mathcal{V} \right) \cap_{[0~I]} \mathcal{H} \right] \;,
\end{equation}
where
\begin{equation}
    \mathcal{V} = \begin{bmatrix} B_u \\ E_u \end{bmatrix} \mathcal{U} \oplus \begin{bmatrix} B_w \\ E_w \end{bmatrix} \mathcal{W} \oplus \begin{bmatrix} \mathbf{b}_{\mathrm{aff}} \\ \mathbf{0} \end{bmatrix} \;,
\end{equation}
and $\mathcal{H} = \{ \mathbf{h}~|~\mathbf{h} \leq \mathbf{e}_{\mathrm{aff}}\}$.}

\revres{2}{The following corollary shows that proposed PWA system reachability calculations produce hybrid zonotopes with tighter convex relaxations than equivalent sets produced using MLD system reachability calculations.
\begin{corollary}
    For $\mathcal{X}_{k+1}^{\mathrm{PWA}}$ computed using~\eqref{eq:gof_recursion_orig} and \eqref{eq:sharp_union_identity}, and $\mathcal{X}_{k+1}^{\mathrm{MLD}}$ computed using \eqref{eq:mld-recursion}, if $\mathcal{X}_0^{\mathrm{PWA}} = \mathcal{X}_0^{\mathrm{MLD}}$, then   
    $\forall k:~\mathrm{CR}(\mathcal{X}_{k+1}^{\mathrm{PWA}}) \subseteq \mathrm{CR}(\mathcal{X}_{k+1}^{\mathrm{MLD}})$.
    \begin{proof}
    Eq.~\eqref{eq:gof_recursion_orig} trivially satisfies~\eqref{eq:reachable-set-recursion-form}. To see that \eqref{eq:mld-recursion} satisfies~\eqref{eq:reachable-set-recursion-form}, define
    \begin{multline}
        \Psi^{\mathrm{MLD}} = \begin{bmatrix}
            I & 0 & 0 & 0 \\
            0 & I & 0 & 0 \\
            0 & 0 & I & 0
        \end{bmatrix} \left[ \left( \begin{bmatrix}
            I & 0 \\
            0 & I \\
            A_x & B_u \\
            E_x & E_u
        \end{bmatrix} \mathcal{D} \oplus \begin{bmatrix}
            0 \\ 0 \\ B_w \\ E_w
        \end{bmatrix} \mathcal{W} \right. \right. \\
        \left. \left.\oplus \begin{bmatrix}
            \mathbf{0} \\ \mathbf{0} \\ \mathbf{b}_{\mathrm{aff}} \\ 
            \mathbf{0}
        \end{bmatrix} \right) \cap_{[0~0~0~I]} \mathcal{H} \right] \;.
    \end{multline}
    Halfspace intersections do not preserve sharpness of hybrid zonotopes~\cite{glunt2025sharp}, so $\Psi^\mathrm{MLD}$ is not guaranteed to be sharp, whereas a set representation $\Psi^\mathrm{PWA}$ constructed using~\eqref{eq:sharp_union_identity} is. Then by Thm.~\ref{thm:pwa-reachability-tightness}, $\mathrm{CR}(\mathcal{X}_{k+1}^{\mathrm{PWA}}) \subseteq \mathrm{CR}(\mathcal{X}_{k+1}^{\mathrm{MLD}})$.
    \end{proof}
\end{corollary}}

\subsection{Memory Complexity of Reachable Set Representations}
\revmargin{2}{R2.3}\revres{2}{
We now analyze the memory complexity of the proposed PWA system reachability calculations. For comparison, results for MLD system calculations using~\eqref{eq:mld-recursion} are provided as well. We follow the approach of~\cite{robbins2025sparsity}, where all problem matrices are assumed to be dense in order to provide an upper bound. To simplify results, $\mathcal{X}_0$ and $\mathcal{U}$ are taken to be zonotopes. The MLD system auxiliary input set $\mathcal{W}$ is a hybrid zonotope.}

\revres{2}{
The state dimension is $n_x$, and $n_{G,0}$ and $n_{G,u}$ are the numbers of generators in $\mathcal{X}_0$ and $\mathcal{U}$, respectively. For hybrid zonotopes $\mathcal{W}$ and $\Psi$, the numbers of continuous generators, binary generatos, and constraints are $n_{Gc,\cdot}$, $n_{Gb,\cdot}$, and $n_{C,\cdot}$, respectively. The combined generator and constraint matrices $G = [G_c~G_b]$ and $A = [A_c~A_b]$ for $\Psi$ are denoted $G_{\psi}$ and $A_{\psi}$. We additionally define the matrix $G_{\psi,+} = [0~0~I] G_{\psi}$. The memory complexity of $\Psi$ for each union identity is given in the Appendix.}

\revres{2}{
The number of inequality constraints in~\eqref{eq:mld-recursion} is $n_e$. For convenience, we define $n_{G,\mathrm{uw}} = n_{G,u} + n_{Gc,w} + n_{Gb,w}$.}

\revres{2}{
The memory complexity of hybrid zonotopes $\mathcal{X}_N$ computed using PWA and MLD system calculations is provided in Tab.~\ref{tab:memory-complexity}. Extending these results to lifted sets $\mathcal{Z}_N$ is straightforward. For the proposed PWA system reachability calculations, the memory complexity is $\mathcal{O}(N)$. This is an improvement over the MLD system case, which has $\mathcal{O}(N^2)$ memory complexity in general. We note that in cases where $A_x=0$, the MLD system reachability calculations reduce to $\mathcal{O}(N)$.}

\setlength{\tabcolsep}{7pt}
\begin{table*}[t]
    \revres{2}{
    \caption{Memory complexity upper bounds for $N$-step reachable sets $\mathcal{X}_N$ using MLD system reachability calculations and the proposed PWA system reachability calculations.}
    \centering
    \begin{tabular}{c|c c c c c}
        \toprule
         method & $n_{Gc}$ & $n_{Gb}$ & $n_C$ & $\nnz{G}$ & $\nnz{A}$ \\ \midrule 
         PWA & $N n_{Gc,\psi} + n_{G,0}$ & $N n_{Gb,\psi}$ & $N (n_{C,\psi} + n_x)$ & $\mathrm{nnz}(G_{\psi,+})$ & $N (2\mathrm{nnz}(G_{\psi,+}) + \mathrm{nnz}(A_{\psi})) + n_x n_{G,0}$ \\ \midrule
          MLD & $N (n_{G,u} + n_{Gc,w} + n_e)$ & $N n_{Gb,W}$ & $N (n_{C,w} + n_e)$ & $N n_x n_{G,\mathrm{uw}}$ & $N^2 \left( \frac{n_e n_{G,\mathrm{uw}}}{2} \right) + N (n_{C,w} (n_{Gc,w} + n_{Gb,w})$ \\  
          & $+ n_{G,0}$ & & & $+ n_x n_{G,0}$ & $ + 2n_e + n_{G,\mathrm{uw}}/2 + n_{G,0})$
         \end{tabular}
    \label{tab:memory-complexity}}
\end{table*}

\subsection{Numerical Example}

To demonstrate the effectiveness of the proposed approach to hybrid system reachability, we consider the two-equilibrium system example from~\cite{bird2023hybrid}. The PWA dynamics are given as
\begin{equation}
    \mathbf{x}_{k+1} = \begin{cases}
        \begin{bmatrix}
            0.75 & 0.25 \\
            -0.25 & 0.75
        \end{bmatrix} \mathbf{x}_k + \begin{bmatrix}
            -0.25 \\ -0.25
        \end{bmatrix}, & x_1 \leq 0, \\
         \begin{bmatrix}
            0.75 & -0.25 \\
            0.25 & 0.75
        \end{bmatrix} \mathbf{x}_k + \begin{bmatrix}
            0.25 \\ -0.25
        \end{bmatrix}, & \text{otherwise},
    \end{cases}
\end{equation}
where the initial state set is
\begin{equation}
    \mathcal{X}_0 = \left\langle \begin{bmatrix}
        0.25 & -0.19 \\
        0.19 & 0.25
    \end{bmatrix}, \begin{bmatrix}
        -1.31 \\ 2.55
    \end{bmatrix} \right\rangle \;.
\end{equation}

For constrained reachability calculations, $\forall k \in \{1, ..., 15\}$, the state constraint set is taken to be 
\begin{subequations}
\begin{align}
    \mathcal{F}_k &= \left\{ \begin{bmatrix} x_1 \\ x_2 \end{bmatrix} \revmargin{3}{R3.2}~\middle|~ x_1 \in [-2, 2],\; x_2 \in [-1, 3] \right\} \;, \\
    &= \left\langle \begin{bmatrix}
        2 & 0 \\
        0 & 2
    \end{bmatrix}, \begin{bmatrix}
        0 \\ 1
    \end{bmatrix} \right\rangle\;.
\end{align}
\end{subequations}

Reachable sets $\mathcal{X}_k$ computed using the proposed PWA reachability calculations with hybrid zonotopes are shown in Fig.~\ref{fig:two_equilibrium_reach_sets}. 
In Fig.~\ref{fig:two_equilibrium_convex_relaxations}, \revmargin{2}{R2.4}\revres{2}{the proposed reachability calculations are shown to produce hybrid zonotopes with much tighter convex relaxations than those produced using MLD-based calculations~\cite{bird2023hybrid}.} 

\begin{figure}[t]
    \centering
    \input{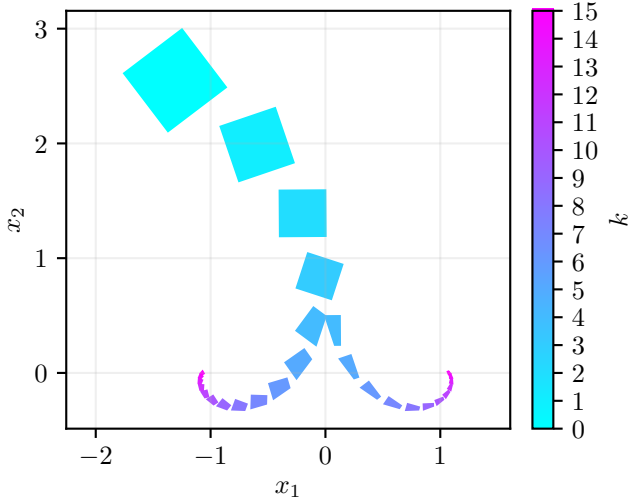}
    \caption{Reachable sets of two-equilibrium system computed using \eqref{eq:union_of_gofs} and \eqref{eq:pwa_single_mode_GOF}, \eqref{eq:gof_recursion_orig}.}
    \label{fig:two_equilibrium_reach_sets}
\end{figure}

\begin{figure}[t]
    \centering
    \input{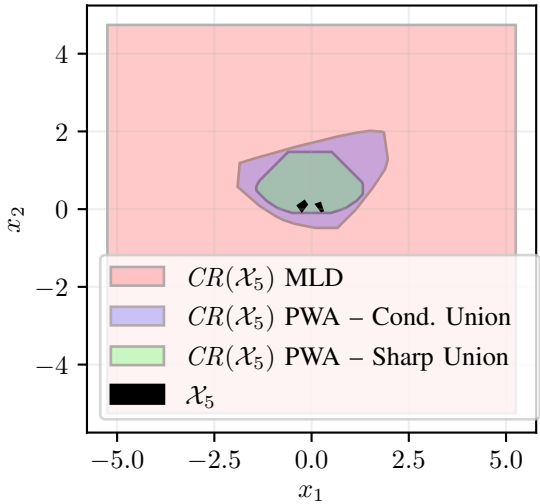}
    \caption{\revmargin{2}{R2.3}Convex relaxations for the 5-step reachable set $\mathcal{X}_5$ of the two-equilibrium system. Note that $\mathit{CR}(\mathcal{X}_5) \neq \mathit{CH}(\mathcal{X}_5)$ for the PWA -- Sharp Union case because the generalized intersection in~\eqref{eq:gof_recursion_orig} does not preserve sharpness.}
    \label{fig:two_equilibrium_convex_relaxations}
\end{figure}

The hybrid zonotope memory complexities for the 15-step reachable sets of this system are given in Tab.~\ref{tab:two_equilibrium_complexity}. 
The lifted and constrained reachability calculations needed to form optimal planning problems incur a modest memory complexity increase over their unconstrained counterparts due to the inclusion of $\overline{\mathcal{S}}$ in~\eqref{eq:lifted-gof-recursion} and $\mathcal{F}_{k+1}$ in~\eqref{eq:gof-constrained}. In both cases, the memory complexity is $\mathcal{O}(N)$. \revmargin{2}{R2.3}\revres{2}{For this system, $A_x=0$ in the MLD formulation so memory complexity is also $\mathcal{O}(N)$}.

The discrepancy in binary variables between MLD and PWA calculations results from the choose-one constraints in the hybrid zonotope union identities presented in the appendix. 
 
In this example, the proposed PWA reachability calculations improve on equivalent MLD calculations both in tightness and in representation size, both of which are known to be very important for efficient mixed-integer optimization~\cite{marcucci2019mixed}.

\setlength{\tabcolsep}{5pt}
\begin{table}
    \caption{Memory complexity of unconstrained reachable sets $\mathcal{X}_{15}$ (Def.~\ref{def:reachable-set}) and lifted, constrained reachable sets $\mathcal{Z}_{15}$ (Def.~\ref{def:Z_def}).}
    \centering
    \begin{tabular}{c|c|c c c c c c}
        \toprule
         set & method & $n$ & $n_{Gc}$ & $n_{Gb}$ & $n_C$ & $\mathrm{nnz}(G)$ & $\mathrm{nnz}(A)$ \\ \midrule 
          $\mathcal{X}_{15}$ & MLD & 2 & 182 & 15 & 150 & 2 & 692 \\ \midrule
          $\mathcal{X}_{15}$ & PWA cond. & 2 & 92 & 30 & 75 & 12 & 442 \\ \midrule
          $\mathcal{X}_{15}$ & PWA sharp & 2 & 122 & 30 & 105 & 12 & 502 \\ \midrule
          $\mathcal{Z}_{15}$ & PWA cond. & 32 & 152 & 30 & 135 & 34 & 722 \\ \midrule
          $\mathcal{Z}_{15}$ & PWA sharp & 32 & 182 & 30 & 165 & 34 & 782
         \end{tabular}
    \label{tab:two_equilibrium_complexity}
\end{table}

\section{ADMM-FP Mixed Integer Programming Heuristic} \label{sec:admm-fp}

Sec.~\ref{sec:hz_reach} developed a flexible method to construct planning problems for hybrid systems using hybrid zonotope reachability analysis. The planning problem is of the form~\eqref{eq:optim_prob_gen} where $\mathcal{Z}_N$ is a hybrid zonotope as defined in Def.~\ref{def:Z_def}.

This section presents an algorithm that 
searches for a feasible $\mathbf{z} \in \mathcal{Z}_N$, while attempting to minimize the objective function, for the optimization problem~\eqref{eq:optim_prob_gen}.
To ensure computational tractability on resource-constrained embedded systems, we avoid traditional branch-and-bound solution approaches, which have high memory requirements in general as they rely on queue data structures to store convex sub-problems. Instead, we propose a new MIP heuristic that integrates elements of ADMM and feasibility pump heuristics. This approach is simple to implement, can be effectively warm-started, and has bounded memory usage. Further, it makes efficient use of the hybrid zonotope structure.

\subsection{ADMM for Hybrid Zonotopes}
Here, we briefly describe an ADMM heuristic to find a feasible $\mathbf{z} \in \mathcal{Z}_N$, for the optimization problem~\eqref{eq:optim_prob_gen}. This \anon{presentation is similar to related work}{presentation extends our prior work} on convex optimization where the feasible set is a constrained zonotope~\cite{robbins2025sparsity}. 

Substituting the hybrid zonotope definition~\eqref{eq:hybzono-definition} for $\mathcal{Z}_N$ into~\eqref{eq:optim_prob_gen} produces the optimization problem
\begin{subequations} \label{eq:miqp-hybzono-expanded}
\begin{align}
    &\min_{\bm{\xi}} \frac{1}{2} \bm{\xi}^T G^T P G \bm{\xi} + \left(G^T (P \mathbf{c} + \mathbf{q})\right)^T \bm{\xi} \;, \\
    &\mathrm{s.t.} \; A \bm{\xi} = \mathbf{b},\; \bm{\xi} \in \mathcal{B}_{\mathrm{MI}} \;,
\end{align}
\end{subequations}
where $G = \begin{bmatrix} G_c & G_b \end{bmatrix}$, $A = \begin{bmatrix} A_c & A_b \end{bmatrix}$, $\bm{\xi} = \begin{bmatrix} \bm{\xi}_c^T & \bm{\xi}_b^T \end{bmatrix}^T$. The mixed integer box is defined as
\begin{equation} \label{eq:B_MI}
    \mathcal{B}_{\mathrm{MI}} = \begin{cases}
        [-1,1]^{n_{Gc}} \times \{-1, 1\}^{n_{Gb}} ,& \text{canonical form,} \\
        [0,1]^{n_{Gc}} \times \{0, 1\}^{n_{Gb}} ,& \text{01-form.}
    \end{cases}
\end{equation}

Defining 
\begin{equation} \label{eq:tildeP-tildeq-def}
    \tilde{P} = G^T P G \;,\; \tilde{\mathbf{q}} = G^T (P \mathbf{c} + \mathbf{q}) \;,
\end{equation}
and neglecting constant terms, \eqref{eq:miqp-hybzono-expanded} is seen to be a mixed integer quadratic program (MIQP). The constraints are of the form $\bm{\xi} \in \mathcal{A} \cap \mathcal{B}_{MI}$ where $\mathcal{A} = \{\bm{\xi} \revmargin{3}{R3.2}~|~ A \bm{\xi} = \mathbf{b} \}$ is an affine set.

Problem~\eqref{eq:miqp-hybzono-expanded} can be written in the form of~\eqref{eq:admm-general} for solution via ADMM by introducing optimization variables $\bm{\zeta}$. Using indicator functions, \eqref{eq:miqp-hybzono-expanded} then becomes
\begin{subequations} \label{eq:miqp-admm-problem}
\begin{align}
&\min_{\bm{\xi}, \bm{\zeta}} \frac{1}{2} \bm{\xi}^T \tilde{P} \bm{\xi} + \tilde{\mathbf{q}}^T \bm{\xi} + I_{\mathcal{A}}(\bm{\xi}) + I_{\mathcal{B}_{\mathrm{MI}}}(\bm{\zeta}) \;, \\
&\mathrm{s.t.}\;\bm{\xi} = \bm{\zeta} \;.
\end{align}
\end{subequations}

Define the matrix $M$ to be
\begin{equation} \label{eq:M-def}
    M = \begin{bmatrix}
    \tilde{P} + \rho I & A^T \\
    A & 0
\end{bmatrix} \;.
\end{equation}

The ADMM iterations~\eqref{eq:admm-iterations-general} then reduce to
\begin{subequations} \label{eq:mi-admm-iterations-with-obj}
\begin{align}
&\bm{\xi}_{k+1} = \begin{bmatrix} I & 0 \end{bmatrix} M^{-1} \begin{bmatrix}
    -\tilde{\mathbf{q}} + \rho (\bm{\zeta}_k - \mathbf{u}_k) \\
    \mathbf{b}
\end{bmatrix} \;,  \\
&\bm{\zeta}_{k+1} = \pi_{\mathcal{B}_{\mathrm{MI}}}(\bm{\xi}_{k+1} + \mathbf{u}_k) \;,  \\
&\mathbf{u}_{k+1} = \mathbf{u}_{k} + \bm{\xi}_{k+1} - \bm{\zeta}_{k+1}  \;.
\end{align}
\end{subequations}

We assume that $A$ is full row rank such that $M$ is invertible. If $A$ is not full row rank, then redundant constraints can be removed as in~\cite[Alg. 1]{vinod2025projection} or using QR decomposition as in~\cite{bird2022hybrid}.

\revmargin{1}{R1.6}\revres{1}{We note that the mixed-integer box projection $\pi_{\mathcal{B}_{\mathrm{MI}}}(\cdot)$ afforded by the hybrid zonotope structure is $\mathcal{O}(1)$ and trivial to compute.}
The iterations~\eqref{eq:mi-admm-iterations-with-obj} are analogous to the ADMM MIP heuristic presented in~\cite{takapoui2020simple}. Because $\mathcal{B}_{\mathrm{MI}}$ is not a convex set, the iterations~\eqref{eq:mi-admm-iterations-with-obj} violate the assumptions of the ADMM algorithm (Sec.~\ref{sec:admm-prelims}) and are not guaranteed to converge. As pointed out in~\cite{takapoui2020simple}, these iterations may still produce approximately feasible---and often approximately optimal---solutions in many cases.

Motivated by the observation in~\cite{takapoui2020simple} that the ADMM MIP heuristic is more likely to converge for larger values of the ADMM penalty parameter $\rho$, we additionally consider the case 
where the objective in~\eqref{eq:optim_prob_gen} is neglected (i.e., $\rho \rightarrow \infty$). Eq.~\eqref{eq:miqp-admm-problem} then becomes
\begin{subequations} \label{eq:miqp-admm-problem-no-obj}
\begin{align}
&\min_{\bm{\xi}, \bm{\zeta}} I_{\mathcal{A}}(\bm{\xi}) + I_{\mathcal{B}_{\mathrm{MI}}}(\bm{\zeta}) \;, \\
&\mathrm{s.t.}\;\bm{\xi} = \bm{\zeta} \;.
\end{align}
\end{subequations}

Referencing~\eqref{eq:admm-iterations-general}, the ADMM iterations for~\eqref{eq:miqp-admm-problem-no-obj} reduce to
\begin{subequations} \label{eq:mi-admm-iterations-no-obj}
\begin{align}
&\bm{\xi}_{k+1} = \pi_{\mathcal{A}}(\bm{\zeta}_k - \mathbf{u}_k) \;, \\
&\bm{\zeta}_{k+1} = \pi_{\mathcal{B}_{\mathrm{MI}}}(\bm{\xi}_{k+1} + \mathbf{u}_k) \;,  \\
&\mathbf{u}_{k+1} = \mathbf{u}_{k} + \bm{\xi}_{k+1} - \bm{\zeta}_{k+1}  \;.
\end{align}
\end{subequations}

In~\anon{\cite{robbins2025sparsity}, the authors show that}{our prior work~\cite{robbins2025sparsity}, we show that} ADMM iterations for~\eqref{eq:optim_prob_gen} where $\mathcal{Z}_N$ is a constrained zonotope make efficient use of the constrained zonotope structure. In particular, \anon{they}{we} argue that $M$ will often have fewer non-zero elements using constrained zonotopes than an equivalent formulation using halfspace representation (H-rep) polytopes, resulting in reduced factorization time and faster iterations. Additionally, as the constrained zonotope factors are normalized by definition, \anon{they}{we} do not employ any preconditioning algorithm. These arguments also apply to the hybrid zonotope case presented here.

\subsection{ADMM-FP Heuristic}
The ADMM iterations are reminiscent of the iterations of the feasibility pump algorithm presented in Sec.~\ref{sec:fp}. For both~\eqref{eq:mi-admm-iterations-with-obj} and \eqref{eq:mi-admm-iterations-no-obj}, the iterates $\bm{\xi}_k$ belong to the affine set $\mathcal{A}$, while the iterates $\bm{\zeta}_k$ belong to the mixed-integer box $\mathcal{B}_{\mathrm{MI}}$. In cases where the ADMM heuristic converges, these iterates are driven to each other such that $\bm{\xi}_k \approx \bm{\zeta}_k \in \mathcal{A} \cap \mathcal{B}_{\mathrm{MI}}$. For the feasibility pump, the iterates $\mathbf{z}^*$ belong to the LP polyhedron $\mathcal{C}$, while the iterates $\tilde{\mathbf{z}}$ belong to the mixed-integer box $\mathcal{R}$. When the heuristic converges, $\mathbf{z}^* \approx \tilde{\mathbf{z}} \in \mathcal{C} \cap \mathcal{R}$. 

The ADMM iterations are generally much more efficient than the feasibility pump iterations as they only require matrix back substitutions, matrix multiplications, and projections onto a mixed-integer box, while the feasibility pump solves an LP at every iteration. ADMM also uses dual variables $\mathbf{u}_k$ to guide its search, which, at least in the context of convex optimization, can make it more efficient than related methods (e.g., alternating projection methods) that do not take advantage of the dual space~\cite{boyd2011distributed}. A key difference between the ADMM iterations~\eqref{eq:mi-admm-iterations-with-obj} and \eqref{eq:mi-admm-iterations-no-obj} and the feasibility pump is that the ADMM iterates $\bm{\xi}_k$ are only feasible with respect to the problem equality constraints, while the feasibility pump iterates $\mathbf{z}^*$ are feasible with respect to both the equality and inequality constraints. The ADMM iterations enforce the inequality constraints and integrality constraints simultaneously via projection onto $\mathcal{B}_{\mathrm{MI}}$.

Empirically, we found that in cases where the ADMM iterations fail to converge, the iterates $(\bm{\xi}_k, \bm{\zeta}_k, \mathbf{u}_k)$ exhibit cycling behavior, i.e., $(\bm{\xi}_k, \bm{\zeta}_k, \mathbf{u}_k)$ can approximately repeat for different values of $k$. For the feasibility pump, cycling is very common and is mitigated using random perturbation and restart procedures (see Sec.~\ref{sec:fp}). As such, taking inspiration from the feasibility pump, we propose random perturbation and restart augmentations to the ADMM iterations. 

\subsubsection{Algorithm}
\begin{algorithm}
\caption{ADMM-FP for hybrid zonotopes \\
Inputs: $\mathcal{Z}_N$, $P$, $\mathbf{q}$, $\bm{\zeta}^*, \mathbf{u}^*$ \\
Outputs: $\mathbf{z} \in \mathcal{Z}_N$}
\begin{algorithmic}[1]
    \State factorize $M$ and $A A^T$
    \State $(\mathrm{phase}, k_{\mathrm{max}}, k) \gets (1, k_{ph1}, 0)$ 
    \State $(r_p, r_-, k_r) \gets (\infty, \infty, 0)$
    \State $(\bm{\xi}_0, \bm{\zeta}_0, \mathbf{u}_0) \gets (\bm{\zeta}^*, \bm{\zeta}^*, \mathbf{u}^*)$ 
    \While{$r_p > \epsilon_p$ \textbf{and} $k < k_{\mathrm{max}}$}
        \If{phase = 1}
            \State $\bm{\xi}_{k+1} \gets \begin{bmatrix} I & 0 \end{bmatrix}        M^{-1} \begin{bmatrix} -\tilde{\mathbf{q}} + \rho (\bm{\zeta}_k - \mathbf{u}_k) \\
            \mathbf{b}
            \end{bmatrix}$
        \Else \Comment{phase 2}
            \State $\bm{\xi}_{k+1} \gets \pi_{\mathcal{A}} (\bm{\zeta}_k - \mathbf{u}_k)$ \Comment{uses $AA^T$ factorization}
        \EndIf
        \State $\bm{\zeta}_{k+1} \gets \pi_{\mathcal{B}_{\mathrm{MI}}}(\bm{\xi}_{k+1} + \mathbf{u}_k)$
        \State $\mathbf{u}_{k+1} \gets \mathbf{u}_{k} + \bm{\xi}_{k+1} - \bm{\zeta}_{k+1}$ 
        \State $r_p \gets ||\bm{\xi}_{k+1} - \bm{\zeta}_{k+1}||_{\infty}$ \Comment{primal residual norm}
        \LineComment{random perturbation if cycle detected} \label{alg:line:admm-fp-cycle-detected}
        \If {cycle detected}
            \State \Call{binflip}{$\bm{\xi_{k+1}, \zeta_{k+1}, \text{PERTURB}}$}
        \EndIf
        \LineComment{restart if failing to make progress} \label{alg:line:admm-fp-restart}
        \If{$r_p < r_-$}
            \State $(k_r, r_-) \gets (0, r_p)$
        \Else
            \State $k_r \gets k_r + 1$
        \EndIf
        \If {$k_r \geq k_{\mathrm{restart}}$}
            \State \Call{binflip}{$\bm{\xi_{k+1}, \zeta_{k+1}, \text{RESTART}}$}
            \State $(k_r, r_-) \gets (0, r_p)$
        \EndIf
        \LineComment{switch to phase 2 if max phase 1 iterations reached}
        \State $k \gets k+1$
        \If {$k = k_{\mathrm{max}}$ \textbf{and} phase = 1}
            \State $(\mathrm{phase}, k_{\mathrm{max}}, k) \gets (2, k_{ph2}, 0)$ 
        \EndIf
    \EndWhile
    \State $\mathbf{z} \gets G \bm{\zeta}_{k} + \mathbf{c}$
    \State \Return $\mathbf{z}$
\end{algorithmic}
\label{alg:admm-fp}
\end{algorithm}

The proposed MIP heuristic is given in Algorithm~\ref{alg:admm-fp}. Throughout the rest of this paper, we refer to this heuristic as ADMM-FP.
When a cycle is detected in the ADMM iterations (line~\ref{alg:line:admm-fp-cycle-detected}), we randomly flip binary elements in $\bm{\zeta}_k$ with probably based on the fractionality of the corresponding element in $\bm{\xi}_k$. Specifically, we define the fractionality of the iterates $\bm{\xi}_k$, $\bm{\zeta}_k$ at index $j$ to be 
\begin{equation} \label{eq:fractionality-def}
    f = |\xi_{j}-\zeta_{j}|/(\overline{\beta}_j - \underline{\beta}_j) \;,
\end{equation}
where $\overline{\beta}_j$ and $\underline{\beta}_j$ are the upper and lower bounds of~\eqref{eq:B_MI} at index $j$. These quantities are defined in order to maintain compatibility with the canonical hybrid zonotope definition. For the case that we are using the 01-form, \eqref{eq:fractionality-def} reduces to $f = |\xi_{j}-\zeta_{j}|$. 

We use $f$ here as the probability that $\zeta_j$ will be flipped. That is, if $f=0$, $\zeta_j$ will not be flipped, and if $f=0.5$, then $\zeta_j$ has a 50\% chance of being flipped. This perturbation method is very conservative when the ADMM is nearly converged, and more aggressive when far from convergence. 
Empirically, we found this perturbation method to be more effective for the ADMM heuristic than the corresponding perturbation procedure in the feasibility pump (see line~\ref{alg:line:fp-perturb} in Algorithm~\ref{alg:fp}). Note that we also use this perturbation method for cycles of length greater than one, in contrast with the feasibility pump which applies a restart procedure when a cycle of length greater than one is detected. Cycles of length greater than one are common for the ADMM heuristic, and can often be broken with the more conservative perturbation method outlined here.

For restarts (line~\ref{alg:line:admm-fp-restart}), we adopt the procedure from the feasibility pump, which is given in line~\ref{alg:line:fp-restart} of Algorithm~\ref{alg:fp}. Rather than apply the restart procedure upon detecting a cycle of length greater than one, we restart if the algorithm fails to make progress in reducing the primal residual norm $r_p$ over $k_{\mathrm{restart}}$ iterations. The random perturbation and restart procedures are given in Algorithm~\ref{alg:perturb-binaries}. 

Drawing inspiration from the OFP, we implement a two-phase approach where the objective function is used to guide the search in phase 1, and only feasibility is considered in phase 2. Phase 1 uses the iterations~\eqref{eq:mi-admm-iterations-with-obj}, and phase 2 uses~\eqref{eq:mi-admm-iterations-no-obj}. Alternatively, a single-phase implementation using iterations~\eqref{eq:mi-admm-iterations-with-obj} with a gradually increasing ADMM penalty parameter $\rho$ could have been considered. We favor the two-phase approach in our implementation because changing $\rho$ values would necessitate refactorizing $M$, which can be computationally expensive; in general, factorizing $M$ scales as $\mathcal{O}(m^3)$ where $M \in \real^{m \times m}$.

Algorithm~\ref{alg:admm-fp} is said to converge when the infinity norm of the primal residual $r_p = ||\bm{\xi}_k -\bm{\zeta}_k||_{\infty} < \epsilon_p$, where $\epsilon_p$ is a feasibility tolerance.
The values $\bm{\zeta}^*, \mathbf{u}^*$ are the initial primal and dual iterates for the algorithm. 
For cycle detection, instead of checking whether the iterates $(\bm{\xi}_k, \bm{\zeta}_k, \mathbf{u}_k)$ have previously been visited, we instead use the primal residual norm $r_p$ as a proxy. Specifically, we use a circular buffer with length $l_{\mathrm{buf}}$ to store $r_p$ values, and check for repeats within a specified numerical tolerance $\epsilon_{\mathrm{buf}}$.

In general, the ADMM-FP parameters may need to be tuned for a given problem class. The most important parameters for performance of the algorithm are $\rho$, which is used to balance optimality versus feasibility during phase 1 of the heuristic, and $\epsilon_p$, which is the numerical tolerance for accepting a solution as feasible.

\begin{algorithm}
\caption{Binary variable flipping for perturbation and restart operations}
\begin{algorithmic}[1]
    \Procedure{binflip}{$\bm{\xi}, \bm{\zeta}, \mathrm{mode}$}
    \For{$j \in \{n_{Gc}, ..., n_{Gc}+n_{Gb}\}$} \Comment{binary variables}
        \State $f \gets |\xi_{j}-\zeta_{j}|/(\overline{\beta}_j - \underline{\beta}_j)$
        \If{mode = PERTURB}
            \State $\mathrm{flip} \gets \mathrm{rand}(0,1) < f$
        \ElsIf{mode = RESTART}
            \State $\mathrm{flip} \gets  f + \max\left(\mathrm{rand}(-0.3, 0.7), 0\right) > 0.5$
        \EndIf
        \If {flip}
            \State $\zeta_{j} = \begin{cases}
                \underline{\beta}_j \;,\; \zeta_{j} = \overline{\beta}_j \\
                \overline{\beta}_j \;,\; \zeta_{j} = \underline{\beta}_j
            \end{cases}$
        \EndIf
    \EndFor
    \EndProcedure
\end{algorithmic}
\label{alg:perturb-binaries}
\end{algorithm}

\subsubsection{Warm-Starting} \label{sec:warm-start}
The performance of Algorithm~\ref{alg:admm-fp} depends strongly on the choice of initial iterates $\bm{\zeta}^*, \mathbf{u}^*$. The ADMM heuristic of~\cite{takapoui2020simple} uses random initial iterates. The authors propose running the heuristic multiple times with different initial iterates until an acceptable solution is achieved. The feasibility pump, by contrast, uses the solution to the convex relaxation of the mixed-integer program as its initial iterate $\mathbf{z}^*$~\cite{fischetti2005feasibility}. Our approach mirrors the feasibility pump, and by default $\bm{\zeta}^*, \mathbf{u}^*$ are computed by solving the convex relaxation of the MIQP. The feasible set $\mathit{CR}(\mathcal{Z}_N)$ then becomes a constrained zonotope, and the ADMM method of~\cite{robbins2025sparsity} is used to produce an initial $\bm{\zeta}^*, \mathbf{u}^*$. When solving this QP, the same ADMM parameters are used as in the ADMM-FP heuristic where applicable. 

The effectiveness of using the solution to the convex relaxation as the initial iterates for Algorithm~\ref{alg:admm-fp} depends on the tightness of $\mathcal{Z}_N$. In the extreme case that $\mathcal{Z}_N$ is sharp, $G \bm{\zeta}^* + \mathbf{c} \in \mathit{CH}(\mathcal{Z}_N)$, and the initial iterates can be expected to be reasonably close to a high-quality, feasible solution to the MIQP. As shown in Sec.~\ref{sec:hz_reach}, the proposed PWA system reachability calculations produce feasible sets $\mathcal{Z}_N$ with tighter convex relaxations than corresponding sets produced using an MLD system description. Use of the sharp union identity (App.~\ref{sec:sharp-union}) can be expected to produce initial iterates which are closer to $\mathcal{Z}_N$ when compared to the condensed union (App.~\ref{sec:condensed-union}) at the expense of a higher-complexity representation of $\mathcal{Z}_N$.

In some cases, we may have a point $\mathbf{z}^*$ that we expect to be reasonably close to a feasible point $\mathbf{z} \in \mathcal{Z}_N$. For example, $\mathbf{z}^*$ could be derived from a previous motion plan or from another planning methodology (e.g., a machine learning-based planner). In either case, $\mathbf{z}^* \notin \mathcal{Z}_N$ in general.

To find warm-start iterates $\bm{\zeta}^*, \mathbf{u}^*$ based on a given point $\mathbf{z}^*$, we solve the projection QP
\begin{equation} \label{eq:warmstart-proj}
    \min_{\mathbf{z}} ||\mathbf{z} - \mathbf{z}^*||_2, \;\mathrm{s.t.} \; \mathbf{z} \in \mathit{CR}(\mathcal{Z}_N) \;, 
\end{equation}
using the ADMM method of~\cite{robbins2025sparsity}. The optimal iterates $\bm{\zeta}$ and $\mathbf{u}$ generated when solving this QP are taken as the initial iterates $\bm{\zeta}^*, \mathbf{u}^*$ for Algorithm~\ref{alg:admm-fp}.
Note that the $\mathbf{u}^*$ computed this way corresponds to a different objective than the one specified in~\eqref{eq:miqp-admm-problem}. In practice, however, warm-starting $\mathbf{u}^*$ in this way works well.

\section{Numerical Examples} \label{sec:numerical-results}

The hybrid zonotope-based reachability analysis developed in Sec.~\ref{sec:hz_reach} provides a convenient and numerically efficient method to formulate planning problems for PWA systems. The ADMM-FP mixed-integer programming heuristic developed in Sec.~\ref{sec:admm-fp} is used to search for feasible, low-cost solutions to these planning problems, and makes efficient use of the hybrid zonotope structure. This section empirically evaluates the effectiveness of the proposed methods through four numerical examples: 1) Random hybrid zonotopes representing the feasible set with random linear objectives, 2) random reach-avoid problems for a double integrator system, 3) a benchmark ball and paddle system, and 4) an integrated behavior and motion planning problem for an autonomous vehicle.

In all cases, computations were executed on an Ubuntu 24.04 desktop with an Intel\textregistered~Core\textsuperscript{\tiny TM} i7-14700 × 28 processor and 32~GB of RAM. All algorithms were configured to use a single thread. 
\anon{The proposed ADMM-FP heuristic and hybrid zonotope reachability calculations were implemented in C++, and Python was used for plotting.}{Problems were formulated using the open-source ZonoOpt library\footnote{\url{https://github.com/psu-PAC-Lab/ZonoOpt}} developed by the authors~\cite{robbins2025sparsity}. The proposed ADMM-FP heuristic was implemented in C++ and integrated into the library.}

\subsection{Random MILP} \label{sec:random-milp}

To evaluate the efficacy of the proposed ADMM-FP heuristic \revmargin{1}{R1.4}\revres{1}{independent of the hybrid system planning context}, we first consider the problem of finding a solution to a random MILP where the feasible set is represented as a hybrid zonotope. 

One hundred different random hybrid zonotopes were generated with dimensions $n=100$, $n_{Gc} = 200$, $n_{Gb} = 50$, and $n_C = 50$ (see~\eqref{eq:hybzono-definition}). The $G_c$, $G_b$, $A_c$, and $A_b$ matrices are sparse with a density (proportion of non-zero elements) of $0.1$. Sparse matrices are typical in the planning problem formulations presented in Sec.~\ref{sec:hz_reach}.
The non-zero elements of these matrices, and all elements of the vectors $\mathbf{c}$ and $\mathbf{b}$, were selected as $\mathrm{rand}(-1,1)$. The canonical hybrid zonotope definition is used here.
A random linear objective was used, i.e., $P=0$ in~\eqref{eq:optim_prob_gen}. The elements of the linear objective vector $\mathbf{q}$ were also selected as $\mathrm{rand}(-1,1)$.

We compare the proposed ADMM-FP heuristic to an ADMM heuristic without the proposed modifications, and an OFP heuristic (see Sec.~\ref{sec:fp}). The ADMM heuristic corresponds to phase 1 of Algorithm~\ref{alg:admm-fp} with restart and perturbation procedures removed. The OFP heuristic was implemented in C++ using the HiGHS LP solver~\cite{Huangfu_Parallelizing_the_dual_2018}. The OFP implementation uses an analogous convergence criterion to the ADMM and ADMM-FP heuristics, i.e., the algorithm is converged if $||\mathbf{z}^* - \tilde{\mathbf{z}}||_{\infty} \leq \epsilon_p$ with $\epsilon_p$ the same as in the ADMM-FP and ADMM heuristics.

We additionally provide data for the state-of-the-art mixed-integer solvers Gurobi (version 13.0.2, commercial) and \revmargin[true]{1}{R1.1}\revres{1}{SCIP} (version 10.0.3, open source). Gurobi and SCIP are configured to terminate as soon as a feasible solution is found within the specified numerical tolerance $\epsilon_p$. 
\revmargin{2}{R2.12}\revres{2}{Problem~\eqref{eq:optim_prob_gen} is passed directly to Gurobi, while SCIP requires a quadratic epigraph reformulation because it can handle quadratic constraints, but not quadratic objective function terms.}
Our intention is not to directly compare the proposed ADMM-FP heuristic to Gurobi and SCIP---which are general-purpose global MIP solvers---but to provide a point of reference for readers who are familiar with these solvers and their performance. Comparisons with the OFP and ADMM heuristics are more salient, as these are both primal heuristics suitable for use on resource-constrained embedded systems.

The ADMM-FP and ADMM (where applicable) heuristics were configured to use the default parameters specified in Tab.~\ref{tab:admm-fp-default-params} (with the exception of $k_{ph1}$ and $k_{ph2}$ as discussed below), and the OFP heuristic was configured to use the parameters in Tab.~\ref{tab:ofp-default-params}. The parameter $l_{\mathrm{buf}}$ in Tab.~\ref{tab:ofp-default-params} refers to the length of the buffer used for cycle detection, i.e., it is the longest cycle that can be detected.
A time limit of $t_{\mathrm{max}}=1$~sec was used for all methods. The max iteration parameters ($k_{ph2}$ for ADMM-FP and $k_{ph1}$ for ADMM) were set to large values such that the algorithms will not terminate before reaching $t_{\mathrm{max}}$ unless the numerical convergence criterion is achieved. When solving convex relaxations using~\cite[Alg.~1]{robbins2025sparsity}, the dual convergence tolerance was set to $\epsilon_d=0.01$.

\setlength{\tabcolsep}{5pt}
\begin{table}
    \caption{Default parameter values for ADMM-FP.}
    \centering
    \begin{tabular}{l|c c c c c c c}
        \toprule
         \textbf{parameter} & $\rho$ & $\epsilon_p$ & $k_{\mathrm{restart}}$ & $k_{ph1}$ & $k_{ph2}$ & $l_{\mathrm{buf}}$ & $\epsilon_\mathrm{buf}$  \\ \midrule
         \textbf{value} & 10 & 0.001 & 5000 & 10000 & 90000 & 20 & 0.001
         \end{tabular}
    \label{tab:admm-fp-default-params}
\end{table}

\setlength{\tabcolsep}{7pt}
\begin{table}
    \caption{Default parameter values for OFP~\cite{achterberg2007improving}.}
    \centering
    \begin{tabular}{l|c c c c c}
        \toprule
         \textbf{parameter} & $\alpha_0$ & $\phi$ & $t$ & $\delta_{\alpha}$ & $l_{\mathrm{buf}}$  \\ \midrule
         \textbf{value} & 1 & 0.9 & 20 & 0.005 & 10
         \end{tabular}
    \label{tab:ofp-default-params}
\end{table}

Results for the random MILP task are given in Fig.~\ref{fig:random-milp}. As seen in Fig.~\ref{fig:random-milp}(a), the ADMM-FP and OFP heuristics found feasible solutions for all tested cases, and the ADMM heuristic found a feasible solution in 79 / 100 cases. All three heuristics found good quality solutions as shown in Fig.~\ref{fig:random-milp}(b), with the objective for the vast majority of found solutions being very close to that of the global optimum (computed using Gurobi). In Fig.~\ref{fig:random-milp}(d) the OFP heuristic converged in fewer iterations than the ADMM and ADMM-FP heuristics, but those iterations required the solution of an LP and thus were much slower than the corresponding ADMM iterations. As such, for these random MILPs, Fig.~\ref{fig:random-milp}(c) shows that the ADMM and ADMM-FP heuristics found a feasible solution about an order of magnitude faster than the OFP heuristic on average. Note that these solution times include the time to solve the convex relaxation of the MILP to generate initial iterates. Gurobi and SCIP do not have a direct equivalent to the number of ADMM or OFP iterations and so none is provided in Fig.~\ref{fig:random-milp}(d).

\begin{figure}
\centering
\input{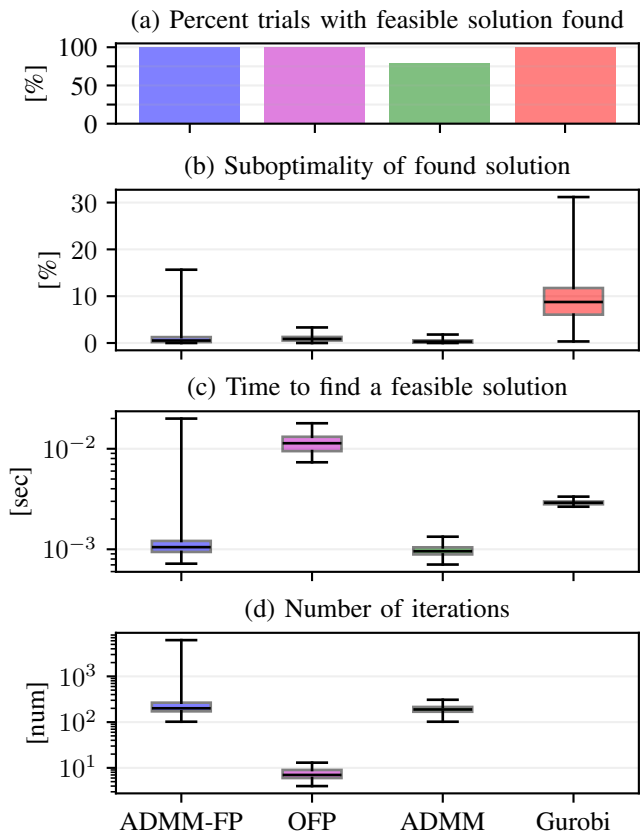}
\caption{Comparison of solution methods for an MILP feasibility problem for which the feasible space is a random, non-empty hybrid zonotope. In these box and whisker plots, the box encloses the middle 50\% of data points, and the whiskers enclose all other data points. The median is depicted as a black line.}
\label{fig:random-milp}
\end{figure}

\subsection{Reach-Avoid Problem} \label{sec:reach-avoid}

\begin{figure}[t]
    \centering
    \input{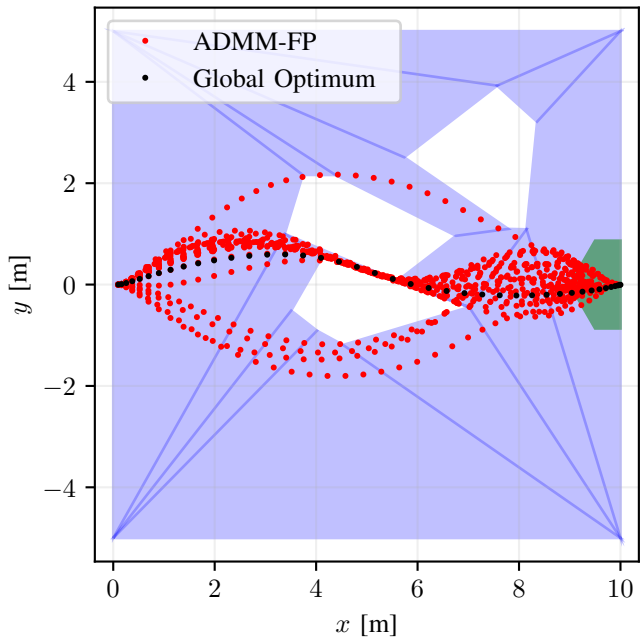}
    \caption{Trajectories generated by ADMM-FP compared to the globally optimal trajectory for a 40-step motion planning problem. The blue set is $\mathcal{P}$ and the green set shows the first two dimensions of $\mathcal{F}_N$.}
    \label{fig:motion-plan-traj}
\end{figure}

\begin{figure}[t]
    \centering
    \input{figs/reach_avoid_partial_solutions.pgf}
    \caption{Trajectories produced by ADMM-FP prior to convergence at the specified numerical tolerance $\epsilon_p=0.001$. Colors indicate the primal residual norm $r_p$ of a given trajectory, with convergence corresponding to $r_p < \epsilon_p$.}
    \label{fig:motion-plan-partial-solutions}
\end{figure}

\begin{figure*}[t]
    \centering
    \input{figs/motion_planning_fp_heuristic_comp.pgf}
    \caption{Times to find a feasible solution, peak memory usage, and percentage of cases where a feasible solution was found for 100 randomly generated motion planning problem. Whiskers in (a) and bands in (b) enclose all data points.}
    \label{fig:motion-plan-sol-time-comp}
\end{figure*}

We next consider the example of a reach-avoid problem formulated using hybrid zonotope reachability analysis. 
\revmargin{1}{R1.4}\revres{1}{This is a canonical robotics planning problem that is fundamentally hybrid in nature and serves as a good basis for comparing mixed-integer solvers~\cite{ioan2021mixed}}
The problem here is to find a feasible trajectory for a discrete time linear system from an initial state $\mathbf{x}_0$ to a terminal set $\mathcal{F}_N$ subject to polytopic state and input domains $\mathcal{S}$ and $\mathcal{U}$ and position constraints $\mathcal{P}$ where $\mathcal{P}$ is a union of polytopes. The horizon of the planning problem is $N$.

The linear dynamics $\mathbf{x}_{k+1} = A \mathbf{x}_k + B \mathbf{u}_k$ of the system are given by the discrete time double integrator model,
\begin{subequations} \label{eq:dbl-int-dyn}
\begin{align}
    &\mathbf{x}_k = \begin{bmatrix} x_k & y_k & v_{x,k} & v_{y,k} \end{bmatrix}^T \;, \\
    &\mathbf{u}_k = \begin{bmatrix} a_{x,k} & a_{y,k} \end{bmatrix}^T \;, \\
    &A = \begin{bmatrix}
        1 & 0 & \Delta t & 0 \\
        0 & 1 & 0 & \Delta t \\
        0 & 0 & 1 & 0 \\
        0 & 0 & 0 & 1
    \end{bmatrix} \;,\; B = \begin{bmatrix}
        \frac{\Delta t^2}{2} & 0 \\
        0 & \frac{\Delta t^2}{2} \\
        \Delta t & 0 \\
        0 & \Delta t
    \end{bmatrix} \;,
\end{align}
\end{subequations}
where $x$ and $y$ are position coordinates, and $v_x$, $v_y$ and $a_x$, $a_y$ are the corresponding velocities and accelerations. The units of $\mathbf{x}_k$ and $\mathbf{u}_k$ are $\begin{bmatrix} \text{m} & \text{m} & \text{m/s} & \text{m/s} \end{bmatrix}^T$ and $\begin{bmatrix} \text{m/s}^2 & \text{m/s}^2 \end{bmatrix}^T$, respectively. 

Because \eqref{eq:dbl-int-dyn} is a linear system, the reachability calculations given for PWA systems in Sec.~\ref{sec:hz_reach} simplify, and we use~\cite[Alg.~2]{robbins2025sparsity}. The obstacle avoidance constraints are enforced by adding the step
\begin{equation}
    \mathcal{Z}_k \gets \mathcal{Z}_k \cap_{\begin{bmatrix} 0 & 0 & \cdots & I & 0 \end{bmatrix}} \mathcal{P} \;,
\end{equation}
to \cite[Algorithm 2]{robbins2025sparsity}, where $\mathcal{Z}_k$ is defined as in Def.~\ref{def:Z_def} and $\mathcal{P}$ is a hybrid zonotope.

In this example, we consider 100 different random obstacle-free spaces $\mathcal{P}$. To construct these random obstacle-free spaces, we first generate three random, five-sided, non-overlapping polytopic obstacles within the space $\mathcal{S}^p = \left\{ \begin{bmatrix} x & y \end{bmatrix}^T \revmargin{3}{R3.2}~\middle|~ x\in [0,10],\; y \in [-5,5] \right\}$. 
Each obstacle has a randomly generated center and vertices with a shortest distance to the center of between 1 and 2~m.
From a vertex representation (V-Rep) of the obstacles and $\mathcal{S}^p$, a free space partition in the form of a union of V-Rep polytopes is constructed using the Hertel and Mehlhorn algorithm~\cite{o1998computational}. This free space partition is then transformed into a hybrid zonotope using~\cite[Thm. 5]{siefert2025reachability}. As shown in~\cite{robbins2024mixed}, hybrid zonotopes constructed in this way are sharp.

The state domain set is $\mathcal{S} = \mathcal{S}^p \times \mathcal{S}^v$, where $\mathcal{S}^v = \mathcal{O}(1,4)$.
The input constraint set is $U = \mathcal{O}(0.1 \cdot \frac{\pi}{2}, 4)$. Zonotopes $\mathcal{S}^v$ and $\mathcal{U}$ inner-approximate quadratic constraints $v_x^2 + v_y^2 \leq v_{\mathrm{max}}^2$ and $a_x^2 + a_y^2 \leq (v_{\mathrm{min}}\omega_{\mathrm{max}})^2$, where $v_{\mathrm{max}}$ and $\omega_{\mathrm{max}}$ are linear and angular velocity limits, respectively, and $v_{\mathrm{min}}$ is the speed above which $\omega_{\mathrm{max}}$ is enforced.

The initial condition for the reach-avoid problem is $\mathbf{x}_0 = \begin{bmatrix} 0.1 & 0 & 0.1 & 0 \end{bmatrix}^T$. The reference state is $\mathbf{x}^r_k = \begin{bmatrix} 10 & 0 & 0 & 0 \end{bmatrix},\; \forall k \in \{1, ..., N\}$. The terminal state constraint set is $\mathcal{F}_N = \{\mathbf{x}^r_N\} \oplus \left( \mathcal{O}(1,6) \times \mathcal{O}(0.01,6)\right)$. The cost function matrices are $Q = \mathrm{diag}([0.1/N, 0.1/N, 0, 0])$, $Q_N = \mathrm{diag}([1, 1, 0, 0])$, and $R = (10/N) I$. To evaluate the scalability of the proposed ADMM-FP heuristic, we define a variable sampling factor $f_s$ such that $\Delta t = 2/f_s$~sec and $N = 10 f_s$. The ADMM-FP, ADMM, and OFP parameters used in this example are the same as those used in Sec.~\ref{sec:random-milp}. Computation times include the time to solve the convex relaxation as is necessary to compute the initial iterates $\bm{\zeta}^*, \mathbf{u}^*$.

Example motion plans for a representative reach-avoid problem are shown in Fig.~\ref{fig:motion-plan-traj}. In this example, Algorithm~\ref{alg:admm-fp} was executed 25 times, and in all 25 cases a feasible solution was found. \revmargin{1}{R1.7}\revres{1}{Solution time and sub-optimality statistics are provided in Tab.~\ref{tab:reach-avoid-stats}.}

\begin{table}[h]
    \centering
    \revres{1}{
    \caption{ADMM-FP solution statistics over 25 trials for a representative reach-avoid problem.}
    \begin{tabular}{l|c c}
        \toprule
         \textbf{statistic} & \textbf{solve time (sec)} & \textbf{sub-optimality (\%)} \\ \midrule
         median & 0.23 & 19.1 \\
         minimum & 0.06 & 3.51 \\
         maximum & 1.29 & 31.6 \\
         \bottomrule
    \end{tabular}
    \label{tab:reach-avoid-stats}}
\end{table}

\revmargin{1}{R1.9}\revres{1}{Fig.~\ref{fig:motion-plan-partial-solutions} illustrates the behavior of the ADMM-FP heuristic prior to convergence. During early iterations, the algorithm explores trajectories that go below the leftmost obstacle. For these iterations, $r_p > \epsilon_p$, and the trajectories noticeably violate obstacle avoidance constraints on the right side of the figure. During later iterations, the algorithm explores trajectories that go around the top of the leftmost obstacle. This ability to escape regions of the search space where progress is stalling is afforded by the proposed perturbation and restart procedures. Fig.~\ref{fig:motion-plan-partial-solutions} also shows that the trajectories at early iterations are well-behaved despite not being strictly feasible. In cases where ADMM-FP does not converge within a given time or iteration budget, the returned trajectory may still be usable in some contexts.}

Fig.~\ref{fig:motion-plan-sol-time-comp} assesses the performance and reliability of the ADMM-FP, OFP, and ADMM heuristics over 100 random obstacle-free spaces $\mathcal{P}$. Results for Gurobi \revmargin{1}{R1.1}\revres{1}{and SCIP}, which are not purely heuristic solvers, are again provided for reference. The scaling factor $f_s$ is used to empirically assess scalability. Computation time was limited to 30~sec for each trial. For the OFP, which relies on iterative LP solutions, the quadratic cost matrix was taken to be $P=0$.

As seen in Fig.~\ref{fig:motion-plan-sol-time-comp}(c), when compared to the ADMM and OFP heuristics, the proposed ADMM-FP heuristic is much more reliable for this task, and is able to find a feasible solution in the vast majority of cases. The ADMM and OFP heuristics, by comparison, quickly become unreliable, with the OFP unable to find a feasible solution in less than 30~sec for $N \geq 30$.
Fig.~\ref{fig:motion-plan-sol-time-comp}(a) shows that the computation time for converged cases is significantly improved using the ADMM-FP heuristic when compared to the OFP. While convergence times for ADMM-FP appear slightly slower than for ADMM in Fig.~\ref{fig:motion-plan-sol-time-comp}(a), this results primarily from ADMM-FP solving more problems than ADMM. For the problems that ADMM solved, the ADMM-FP solution times were similar on average. Defining $t_{\mathrm{ADMM-FP}}$ to be the ADMM-FP solve time and $t_{\mathrm{ADMM}}$ to be the ADMM solve time, for problems where ADMM found a solution, the median $t_{\mathrm{ADMM-FP}} / t_{\mathrm{ADMM}}$ was 1.08 (minimum: 0.11, maximum: 14.19). As such, the solution time penalties for the proposed perturbation operations in the ADMM-FP algorithm are mild and easily justified by the significantly improved convergence properties.

\revmargin{1}{R1.5}\revres{1}{As another benchmark, we consider a simple round-and-solve variant of the ADMM heuristic. In this heuristic, the ADMM algorithm runs for $1000$ iterations, then binary variables are fixed to their rounded values and the corresponding QP is solved. This heuristic produced a feasible solution in $7 / 100$ cases for $N=10$, and in $0$ cases for $N > 10$.}

While Gurobi and SCIP are global MIP solvers and not directly comparable to the mixed-integer heuristics considered here, we note that the ADMM-FP heuristic typically finds feasible solutions to the reach-avoid problem considerably faster than either of these solvers. Further, the ADMM-FP heuristic is only slightly less reliable than Gurobi and SCIP, and finds a feasible solution within the 30~sec time limit in 100\% of trials with $N < 50$. \revmargin{1}{R1.1}\revres{1}{In Fig.~\ref{fig:motion-plan-sol-time-comp}(b), the ADMM and ADMM-FP solution approaches can be seen to use an order of magnitude less memory and to have much more consistent memory usage than general-purpose solvers. The small memory variability in ADMM and ADMM-FP solves is due to variability in problem size and structure within the 100 random reach-avoid scenarios.}
Clearly, for some problem classes such as this reach-avoid example, the ADMM-FP heuristic is competitive with state-of-the-art solvers \revmargin[true]{1}{R1.1}\revres{1}{at a much lower memory footprint} despite being an iterative, heuristic method.

\subsection{Ball and Paddle System} \label{sec:ball-paddle}

In this example, we consider the ball and paddle system benchmark from~\cite{marcucci2019mixed} (also used in~\cite{wu2023soy}). 
\revmargin{1}{R1.4}\revres{1}{This serves as a challenge problem, as solutions are known to be very difficult to identify.}
The ball and paddle system models the interaction of three objects: A fixed ceiling, a movable paddle, and a ball. There are 10 states: The 2D positions of the ball and paddle, the orientation of the ball, and their associated time derivatives. The control inputs are the paddle acceleration components. This is a discrete time PWA system with seven dynamic modes. There is one mode for the case of no contact between the ball and the paddle or ceiling; there are four modes for the ball sliding right or left on the ceiling or the paddle; and there are two modes for the ball sticking/rolling on the ceiling or paddle. The ball and paddle positions and velocities are initially zero. The goal of this planning problem is to find a trajectory of the system that rotates the ball by $180^o$ in 20 time steps while returning the ball and paddle positions and velocities to zero. The ball and paddle example is known to be challenging as it is severely underactuated, with feasible solutions very difficult to identify. See~\cite{marcucci2019mixed} for more information about the ball and paddle system.

We formulate the ball and paddle system planning problem using the hybrid zonotope reachability calculations of Sec.~\ref{sec:hz_reach}. The condensed union identity (App.~\ref{sec:condensed-union}) is used in \eqref{eq:union_of_gofs}.

The proposed ADMM-FP heuristic failed to produce a solution with feasibility tolerance of $\epsilon_p = 0.001$ in less than 30~sec in our testing. However, it consistently produced solutions with $\epsilon_p = 0.01$. \revmargin{1}{R1.8}\revres{1}{A more interpretable numerical solution tolerance is given by projecting the factor space tolerance $\epsilon_p$ onto the problem space as $\mathbf{e} = |G| (\epsilon_p \mathbf{1})$, where $G = [G_c~G_b]$ is the combined hybrid zonotope generator matrix. For this problem, the maximum position component of $\mathbf{e}$ for $\epsilon_p=0.01$ is $0.003~$m. This quantity can be misleading however, as it does not necessarily indicate that there is a strictly feasible solution within $\mathbf{e}$ of the ADMM-FP solution. Instead, this is effectively the resolution of the solve, and says that the ADMM-FP iterates are within $\mathbf{e}$ of each other when projected onto the problem space.}

In 50 trials, 50 \revmargin{2}{R2.11}\revres{2}{solutions within the loosened numerical tolerance of $\epsilon_p=0.01$} were produced. 
\revres{1}{Statistics for these 50 trials are given in Tab.~\ref{tab:ball-paddle-stats}.}\revmargin{1}{R1.7}
\begin{table}[h]
    \centering
    \revres{1}{
    \caption{ADMM-FP solution statistics over 50 trials with $\epsilon_p = 0.01$ for the ball and paddle system.}
    \begin{tabular}{l|c c}
        \toprule
         \textbf{statistic} & \textbf{run time (sec)} & \textbf{iteration count} \\ \midrule
         median & 0.99 & 14{,}915 \\
         minimum & 0.03 & 377 \\
         maximum & 7.8 & 121{,}561 \\
         \bottomrule
    \end{tabular}
    \label{tab:ball-paddle-stats}}
\end{table}

Fig.~\ref{fig:bouncing-ball-traj} shows the solution from one of the trials. In this figure, the planned trajectory can be seen to deviate from a trajectory generated via open-loop simulation using the planned control inputs. 
Nevertheless, the trajectory produced by the ADMM-FP heuristic nearly satisfies the planning problem specifications. In other contexts, such as those where a feedback controller tracks the planned trajectory, a nearly feasible trajectory such as this may still be useful.

\begin{figure}[t]
    \centering
    \input{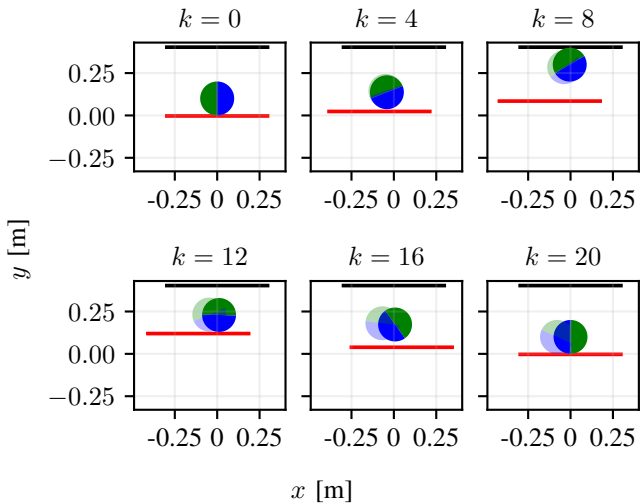}
    \caption{Example ball and paddle system solution with $\epsilon_p = 0.01$. The black bar is the ceiling and the red bar is the paddle. The opaque ball corresponds to the planned trajectory while the translucent ball corresponds to the open-loop simulation using the planned control inputs.}
    \label{fig:bouncing-ball-traj}
\end{figure}

\subsection{Combined Behavior and Motion Planning} \label{sec:combined-behavior-motion-planning}
As the final numerical example, we consider the problem of combined behavior and motion planning for an autonomous vehicle. 
\revmargin{1}{R1.4}\revres{1}{This problem exercises the full PWA system reachability and ADMM-FP solution framework proposed in this article.}
In this scenario, an ego vehicle must plan a collision-free trajectory through a two-lane road where the positions of other obstacle vehicles on the road are time-varying. The system has two dynamic modes (i.e., behaviors): One for left lane tracking and one for right lane tracking.
Note that Sec.~\ref{sec:experiment} has an experimental application of the same system using small ground robots, and all parameter values in this section match those for the experiment.

The ego vehicle dynamics are modeled using the double integrator system~\eqref{eq:dbl-int-dyn}. For the combined behavior and motion planning scenario, we replace coordinates $x$ and $y$ in~\eqref{eq:dbl-int-dyn} with road-aligned Frenet frame coordinates $s$ (distance along the road center line) and $d$ (perpendicular to road center line, positive in right lane), respectively~\cite{reiter2021parameterization}. 

A lane-tracking controller of the form
\begin{equation} \label{eq:comb-beh-motion-fbk-control-law}
    \ddot{d} = -\begin{bmatrix} k_d & k_{\dot{d}}\end{bmatrix} \begin{bmatrix} d \\ \dot{d} \end{bmatrix} + k_{\mathit{ff}} + \ddot{d}' \;,
\end{equation}
is assumed for lateral vehicle control, where $k_d=0.213$~1/s$^2$ 
and $k_{\dot{d}}=0.653$~1/s are feedback terms, $k_{\mathit{ff}}$ is a feedforward term, and $\ddot{d}'$ is a small perturbation. Defining 
\begin{equation}
K = \begin{bmatrix} 0 & 0 & 0 & 0 \\
                    0 & k_d & 0 & k_{\dot{d}}
\end{bmatrix} \;,
\end{equation}
the affine dynamics for right and left lane tracking are
\begin{equation} \label{eq:closed-loop-lane-tracking-modes}
    \mathbf{x}_{k+1} = \begin{cases} A^{\mathit{rl}} \mathbf{x}_k + B^{\mathit{rl}} \mathbf{u}'_k + \mathbf{f}^{\mathit{rl}},& \text{right lane tracking,} \\
    A^{\mathit{ll}} \mathbf{x}_k + B^{\mathit{ll}} \mathbf{u}'_k + \mathbf{f}^{\mathit{ll}},& \text{left lane tracking,}
 \end{cases} 
\end{equation}    
where 
\begin{subequations}
\begin{align}
    &\mathbf{u}'_k = \begin{bmatrix} \ddot{s} & \ddot{d}' \end{bmatrix} \;, \\
    &A^{\mathit{rl}} = A^{\mathit{ll}} = A-BK \;, \\
    &B^{\mathit{rl}} = B^{\mathit{ll}} = B \;, \\
    &\mathbf{f}^{\mathit{rl}} = B K \begin{bmatrix} 0 & d^{\mathit{rl}} & 0 & 0 \end{bmatrix}^T \;, \\
    &\mathbf{f}^{\mathit{ll}} = B K \begin{bmatrix} 0 & d^{\mathit{ll}} & 0 & 0 \end{bmatrix}^T \;,
\end{align}
\end{subequations}
and $d^{\mathit{rl}}$ and $d^{\mathit{ll}}$ are the lateral positions of the center lines of the right and left lanes, respectively. Note that $\mathbf{f}_{\mathit{rl}}$ and $\mathbf{f}_{\mathit{ll}}$ correspond to the $k_{\mathit{ff}}$ term in~\eqref{eq:comb-beh-motion-fbk-control-law}.

\begin{figure}
    \centering
    \input{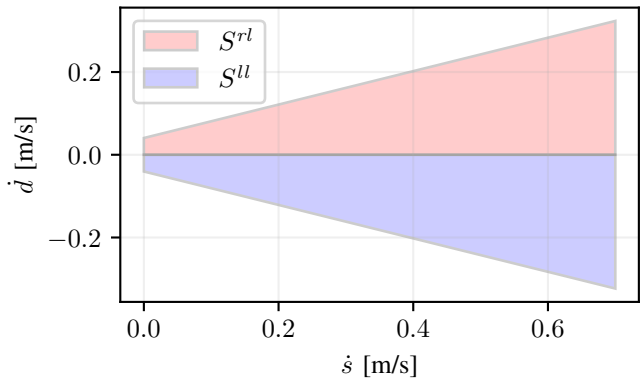}
    \caption{Velocity constraints for combined behavior and motion planning system.}
    \label{fig:behavior-motion-planning-constraints}
\end{figure}

The domains $\revmargin{2}{R2.7}\revres{2}{\mathcal{D}}^i_k$ of the right and left lane tracking modes differ in terms of the velocity constraints, as depicted in Fig.~\ref{fig:behavior-motion-planning-constraints}. The system tracks the right lane when $\dot{d} \geq 0$ and the left lane when $\dot{d} \leq 0$. Position constraints in $\revmargin[true]{2}{R2.7}\revres{2}{\mathcal{D}}^i_k$ bound the relevant segment of the road, i.e., $s \in [0, 10.5]$~m, $d \in [-0.51, 0.51]$~m.
The control input $\ddot{d}'$ provides a mechanism to switch between right and left lane tracking modes. In this example, little authority is given to this control input, with $\ddot{d}' \in [-0.01, 0.01]$~m/s$^2$, to prevent rapid switching between lane tracking modes. The longitudinal acceleration is constrained as $\ddot{s} \in [-1, 1]$~m/s$^2$.

The coupled $\dot{s}$ and $\dot{d}$ constraints prevent lane changes at low vehicle speeds. Here, the constraints are defined such that the heading angle with respect to the road center line does not exceed $30^\circ$ during a turn.

The planning problem for this PWA system is formulated using the hybrid zonotope reachability calculations described in Sec.~\ref{sec:hz_reach}. When computing the graph of the function $\Psi_k$ using~\eqref{eq:union_of_gofs}, the condensed union identity (App.~\ref{sec:condensed-union}) is used.
Obstacle avoidance constraints are enforced using constrained graphs of functions, i.e., $\tilde{\Psi}_k = \Psi_k \cap_{\begin{bmatrix} 0 & 0 & I \end{bmatrix}} (\mathcal{P}_{k+1} \times \overline{S}_v)$, where $\mathcal{P}_k$ is the obstacle-free space at time step $k$ and $\overline{S}_v$ is an interval bound on the velocity $\begin{bmatrix} \dot{s} & \dot{d} \end{bmatrix}^T$.
To construct $\mathcal{P}_k$, the portion of each lane occupied by an obstacle vehicle is removed, bloating to account for ego vehicle geometry and inter-sample behavior. The remaining lane segments are represented as zonotopes $\mathcal{PZ}^i_k$, and $\mathcal{P}_k = \bigcup_i \mathcal{PZ}^i_k$ using the zonotope union identity (App.~\ref{sec:zonotope-union}).

The reference trajectory $\mathbf{x}^r_k$ follows the right lane from the ego vehicle initial $s_0$ at a speed of 0.5~m/s. The cost function matrices are $Q = \mathrm{diag}([0.5, 0.5, 0, 0])$, $Q_N = 10 I$, and $R = 10 I$. The discrete time step is $\Delta t = 1$~sec, and the prediction horizon is $N=15$.

In this example, we deviate from the default ADMM-FP parameters provided in Tab.~\ref{tab:admm-fp-default-params}. Specifically, we choose $\rho = 100$, $\epsilon_p = 0.01$, $k_{\mathrm{restart}}=1000$, and $k_{ph1}=5000$. These parameters were tuned empirically, with $\rho=100$ and $k_{\mathrm{restart}}=1000$ generally showing reduced time to find a low-cost feasible solution for this system when compared to the default parameters. 
For this system, trajectories with $\epsilon_p = 0.01$ are sufficiently close to feasible that choosing a smaller $\epsilon_p$ is not necessary. The dual convergence tolerance for convex relaxation and warm-start QPs was set to $\epsilon_d = 0.1$.

The combined behavior and motion plan for a representative scenario with two obstacle vehicles 
is shown in Fig.~\ref{fig:behavior-motion-planning-scenario}. The ego vehicle changes lanes to pass a slow-moving vehicle in the right lane, and waits for a faster-moving vehicle in the left lane to advance before merging back into the right lane.

\begin{figure*}
\centering
\input{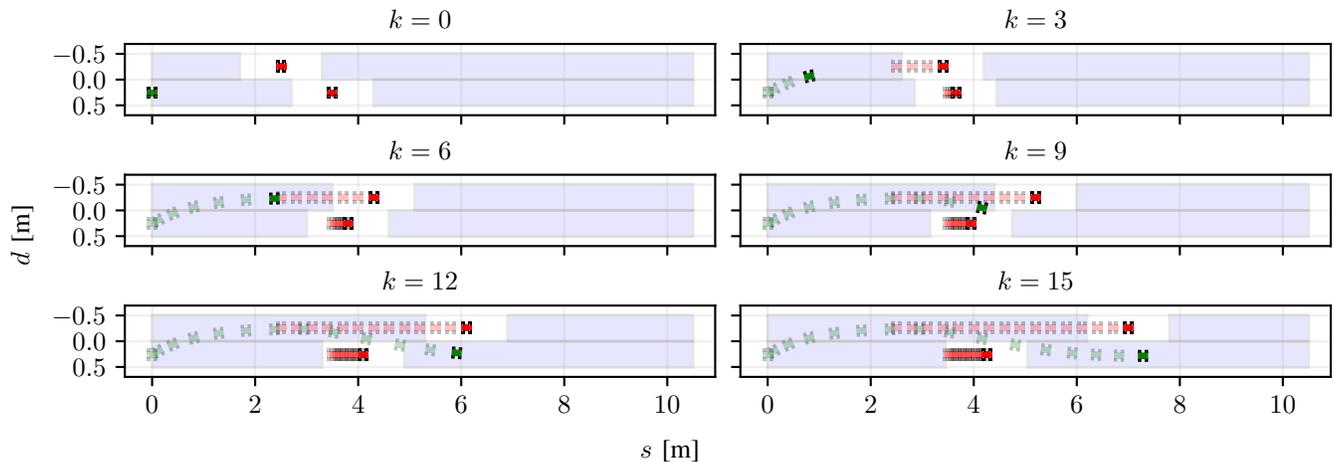}
\caption{Behavior and motion planning scenario. The green vehicle is the ego vehicle, and the red vehicles are the obstacle vehicles. The blue regions are the the time-varying obstacle-free space $\mathcal{P}_k$.}
\label{fig:behavior-motion-planning-scenario}
\end{figure*}

To empirically evaluate the efficacy of the warm-start procedure proposed in Sec.~\ref{sec:warm-start}, we applied random perturbations to feasible trajectories $\mathbf{z} \in \mathcal{Z}_N$ generated using the default convex relaxation procedure for producing initial iterates $\bm{\zeta}^*, \mathbf{u}^*$. The warm-start vector was constructed as $\mathbf{z}^* = \mathbf{z} + \mathbf{w}$ where the elements $w_i$ of $\mathbf{w}$ are drawn from a normal distribution with $\mu=0$ and $\sigma=0.1$.

\begin{figure}
\centering
\input{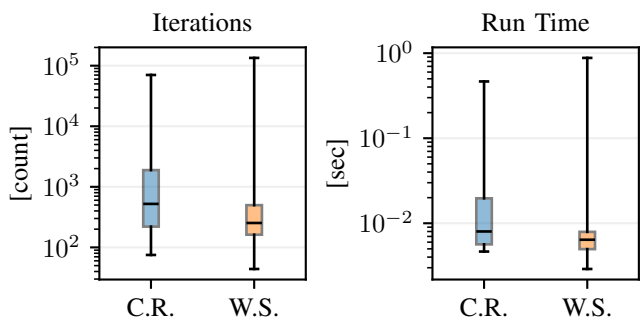}
\caption{Comparison of convex relaxation (C.R.) initialization and warm-start (W.S.) initialization for randomly generated combined behavior and motion planning scenarios.
Warm-start times include the time to solve~\eqref{eq:warmstart-proj}, and convex relaxation times include the time to solve the relaxation of~\eqref{eq:optim_prob_gen}. Whiskers enclose all data points.}
\label{fig:warmstart-stats}
\end{figure}

Fig.~\ref{fig:warmstart-stats} shows the number of iterations and run time over 100 randomly generated traffic scenarios. The random scenarios differ from Fig.~\ref{fig:behavior-motion-planning-scenario} only in the initial positions and constant speeds of the two obstacle vehicles. Initial positions were randomly selected from a uniform distribution over the range $[0, v_{\mathrm{max}} \cdot \Delta t \cdot N]$, where $0$ is the initial position of the ego vehicle. The obstacle vehicle speeds were drawn from a normal distribution with a mean of 0.2~m/s and a standard deviation of 0.1~m/s.

The ADMM-FP heuristic with the default convex relaxation-based initialization produced a feasible solution within 1~sec in 90 of the 100 random scenarios. The warm-start procedure produced a feasible solution in 89 of 90 scenarios.

On average, for the combined behavior and motion planning system, the warm start procedure results in reduced iterations and run time to produce a feasible solution when compared to cases using the convex relaxation of the original optimization problem~\eqref{eq:optim_prob_gen}.

\subsection{Summary and Note to Practitioners}
In general, mixed-integer programming heuristics are not guaranteed to produce feasible solutions. However, for some problems, they can quickly produce good quality, feasible solutions without requiring memory-intensive, branch-and-bound-based methods.
For the random MILP example in Sec.~\ref{sec:random-milp}, all of the tested heuristics performed well. For the reach-avoid problem in Sec.~\ref{sec:reach-avoid}, the performance and reliability of the OFP and ADMM heuristics drops off precipitously as the planning horizon increases, while the proposed ADMM-FP heuristic continues to perform well. The ADMM-FP heuristic performs well for the combined behavior and motion planning example of Sec.~\ref{sec:combined-behavior-motion-planning} as well, but struggles to produce highly accurate solutions to the challenging ball and paddle system problem of Sec.~\ref{sec:ball-paddle}.

The proposed ADMM-FP heuristic extends the class of problems to which mixed-integer programming heuristics can be effectively applied, but it may not be suitable for every problem as seen in Sec.~\ref{sec:ball-paddle}. In practice, we find that the ADMM-FP heuristic performs best when 1) there are many feasible combinations of binary variables in the planning problem, and 2) the initial iterates $\bm{\zeta}^*, \mathbf{u}^*$ are not too far from a feasible solution, either via effective warm-starting or strong convex relaxation.

\section{Experimental Results} \label{sec:experiment}

In this section, we experimentally evaluate the efficacy and efficiency of the proposed methods in application to combined behavior and motion planning for an autonomous vehicle, 
This uses the same task, dynamics model, and parameters as in Sec.~\ref{sec:combined-behavior-motion-planning}.

This experiment was implemented in a motion capture robotics laboratory using small autonomous ground robots.
OptiTrack Prime$^\text{X}$22 motion capture cameras were used for \revmargin{2}{R2.10}\revres{2}{online} pose and velocity feedback. A two-lane circuit was constructed inside the motion capture volume, and two parked obstacle vehicles were placed in the right lane. A Clearpath TurtleBot 4 was used as a moving obstacle vehicle. The TurtleBot was programmed to drive in the left lane at a constant Frenet frame speed of $\dot{s}=0.2$~m/s. The ego vehicle was a Husarion ROSbot 3, which has a Raspberry Pi 5 embedded computer. All combined behavior and motion planning calculations 
were implemented as C++ ROS2 nodes and executed onboard the ROSbot. These include the hybrid zonotope reachability analysis and ADMM-FP heuristic.

The combined behavior and motion planning calculations, as described in Sec.~\ref{sec:combined-behavior-motion-planning}, are carried out in the Frenet frame.
The ROSbot 3 is a differential drive robot and its dynamics are accurately described by the unicycle model which, in the Frenet frame, is
\begin{equation} \label{eq:frenet-unicycle}
    \dot{s} = \frac{v \cos(\mu)}{1 - \kappa(s) d},~\dot{d} = v \sin(\mu),~\dot{\mu} = \omega - \kappa(s) \dot{s},~\dot{v} = a \;,
\end{equation}
where $s$ and $d$ are position coordinates, $\mu$ is the heading angle relative to the road centerline, and $v$ is the velocity. The forward acceleration $a$ and angular velocity $\omega$ are inputs to the system, and $\kappa(s)$ is the road curvature.
Eq.~\eqref{eq:frenet-unicycle} is differentially flat with flat outputs $s$ and $d$, i.e.,
\begin{subequations}
\begin{align}
    &\mu = \arctan \left( \frac{\dot{d}}{\dot{s}(1 - \kappa(s) d)} \right) \;, \\
    &v = \sqrt{\left( \dot{s} (1 - \kappa(s) d) \right)^2 + \dot{d}^2} \;.
\end{align}
\end{subequations}
This permits planning in terms of $s$, $d$ and their derivatives, thereby justifying the use of the double integrator model formulation described in Sec.~\ref{sec:combined-behavior-motion-planning}.
Similar equations to~\eqref{eq:frenet-unicycle} for the kinematic single track model, which is more commonly used in autonomous vehicle motion planning, are given in~\cite{reiter2021parameterization}. The single track model is also differentially flat. 

In our experiment, we transform Frenet frame trajectories into Cartesian frame trajectories with position coordinates $x$ and $y$, heading $\theta$, and speed $v$. 
A path tracking controller is used to track these trajectories, compensating for model error and disturbances.

We implemented a default behavior and motion plan for trivial scenarios. This plan extends the active behavior and motion plan, propagating the system state forward using~\eqref{eq:closed-loop-lane-tracking-modes}, with the lane tracking mode set to maintain the mode that is active at the end of the planning horizon. The speed component $\dot{s}$ is similarly regulated to $\dot{s}^r =0.5$~m/s, assuming the control law $\ddot{s} = -k_{\dot{s}}(\dot{s} - \dot{s}^r)$ with $k_{\dot{s}} = 0.618$~s$^{-1}$.

When the default behavior and motion plan was predicted to result in a collision, a ROS2 service implementing hybrid zonotope reachability analysis (Algorithm~\ref{alg:online_planning_problem_formulation}) and ADMM-FP (Algorithm~\ref{alg:admm-fp}) calculations was called. 
Inputs to the service include constrained graphs of functions $\tilde{\Psi}_k$, which are calculated using \eqref{eq:union_of_gofs} and Eq.~\eqref{eq:pwa_single_mode_GOF_implementation}, the initial condition $\mathbf{x}_0$, the state and input bounds $\overline{\mathcal{S}}$ and $\overline{\mathcal{U}}$, reference trajectory $\mathbf{x}^r_k$, and cost function matrices $Q$, $R$, and $Q_N$. The warm-start procedure of Sec.~\ref{sec:warm-start} was used, with warm-start vector $\mathbf{z}^*$ taken to be the state and input sequence as defined in Def.~\ref{def:Z_def} for the active behavior and motion plan.
The ADMM-FP solution time was limited to 1~sec. 

When constructing the time-varying $\tilde{\Psi}_k$, obstacle vehicles were assumed to drive at constant $\dot{s}$ and maintain their current lane. The motion capture system was used for obstacle vehicle position and velocity feedback. 

\begin{figure*}[t]
    \centering
    \input{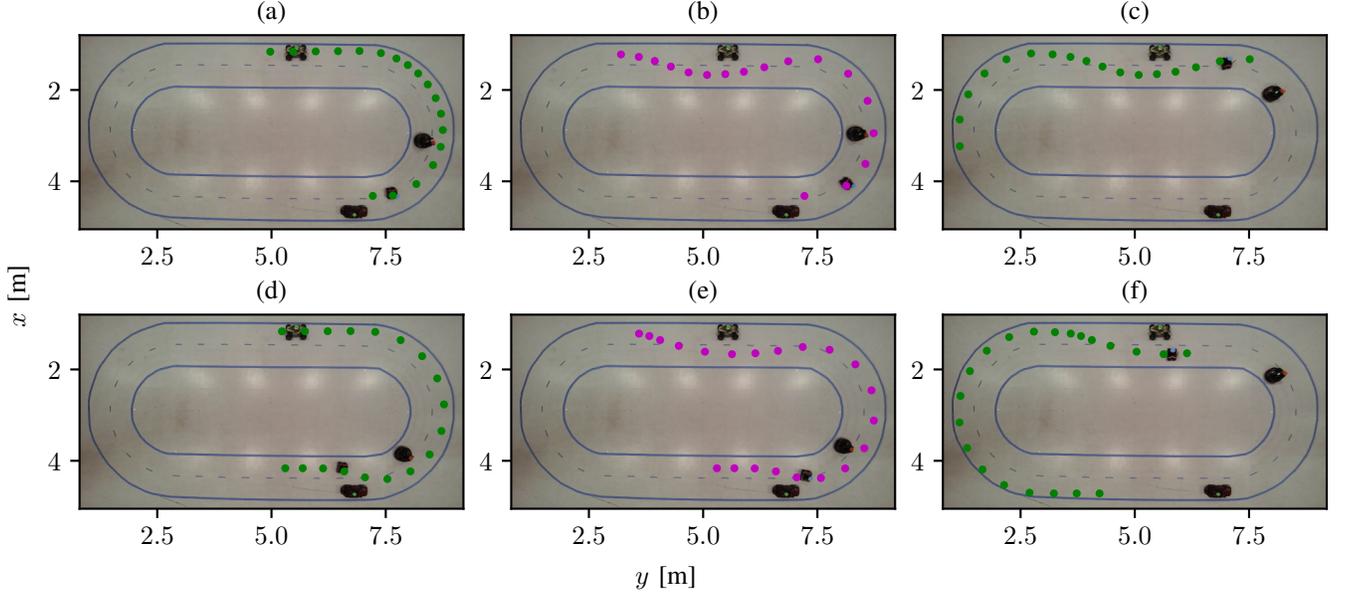}
    \caption{Overhead images from the experimental demonstration of combined behavior and motion planning. Imperfect correction of the fisheye effect in the overhead camera results in some minor misalignment of the behavior and motion plan with the camera images.}
    \label{fig:combined-behavior-motion-planning-experiment-traj}
\end{figure*}

Over the course of this experiment, the service implementing Algorithms~\ref{alg:online_planning_problem_formulation} and \ref{alg:admm-fp} was invoked 19 times. In 15 of these cases, the ADMM-FP algorithm was able to find a feasible solution within the maximum allowed time of 1~sec. 
When the algorithm did not find a solution, the vehicle continued to follow the last valid motion plan and the planning service was invoked again. 
The 4 times this occurred consisted of 2 occasions where the algorithm did not find a solution for two consecutive invocations of the planning service.
Because the planning horizon was 15 seconds, the time to collision was never less than about 13 seconds. Given the wide variability in solution times for MIP solvers, and lack of convergence guarantees for heuristic algorithms, safety must be ensured without requiring the \revmargin{2}{R2.10}\revres{2}{online} solution of an MIP. In our implementation, we utilize a simple emergency braking procedure if the time to collision falls below 3 seconds.

Two example combined behavior and motion plan updates using Algorithms~\ref{alg:online_planning_problem_formulation} and \ref{alg:admm-fp} are depicted in Fig.~\ref{fig:combined-behavior-motion-planning-experiment-traj}. Sub-plots (a)-(c) show one planning sequence and (d)-(f) show another. Two parked obstacle vehicles are in the right lane at the top and bottom of each image, and a third, moving obstacle vehicle is seen in the left lane. The fourth vehicle is the ego vehicle. The combined behavior and motion plan is depicted with a sequence of dots -- green when updated with the default algorithm and magenta when updated with Algorithms~\ref{alg:online_planning_problem_formulation} and \ref{alg:admm-fp}.
In (a) and (d), the default behavior and motion plan is in collision with the parked obstacle vehicle at the top of the image, and the behavior and motion planning service is invoked. In (b) and (e), the combined behavior and motion plan is updated using Algorithms~\ref{alg:online_planning_problem_formulation} and \ref{alg:admm-fp}. The evolution of the scenario after several seconds is shown in (c) and (f), and the ego vehicle is seen to be safely navigating around other vehicles in the environment according to the combined behavior and motion plan. Extensions to the plan in (c) and (f) relative to (b) and (e) are products of the default planning algorithm. 

Table~\ref{tab:experiment-stats} gives problem formulation and solution time statistics for the 15 cases where the ADMM-FP algorithm found a feasible solution within the specified time limit. The problem formulation times using hybrid zonotope reachability analysis are consistently on the order of milliseconds. The time to solve the warm-start QP is typically on the order of tens of milliseconds.
The range of solution times for the ADMM-FP heuristic spans multiple orders of magnitude, in keeping with Figs.~\ref{fig:random-milp}, \ref{fig:motion-plan-sol-time-comp}, and \ref{fig:warmstart-stats}.
Typically, the ADMM-FP heuristic performs quite well, with its median solution time less than that of the warm-start QP. This is not generally possible with conventional branch-and-bound methods as they require the solution of many QPs to generate a feasible solution.

\setlength{\tabcolsep}{7pt}
\begin{table}
    \caption{Problem formulation and solution time statistics for combined behavior and motion planning experiment.}
    \centering
    \begin{tabular}{l|c|c c c}
        \toprule
         & unit & median & min & max \\ \midrule
         ADMM-FP time & ms & 5.2 & 3.4 & 103.0 \\ \midrule
         ADMM-FP iterations & count & 114 & 21 & 3888 \\ \midrule
         Warm start time & ms & 15.4 & 4.1 & 34.5 \\ \midrule
         Eq.~\eqref{eq:union_of_gofs} and Eq.~\eqref{eq:pwa_single_mode_GOF_implementation} time & ms & 1.9 & 1.3 & 1.9 \\ \midrule
         Alg.~\ref{alg:online_planning_problem_formulation} time & ms & 4.4 & 2.9 & 6.4         
         \end{tabular}
    \label{tab:experiment-stats}
\end{table}

\section{Conclusion} \label{sec:conclusion}

This paper presented hybrid zonotope-based reachability calculations for PWA systems and used them to formulate optimal planning problems. A novel MIP heuristic, termed ADMM-FP, was proposed to find feasible solutions to these planning problems.

Experimental results demonstrate that this framework is tractable to implement on embedded systems and can be very efficient; in the experiment, the median time to formulate and solve a combined behavior and motion planning problem was about 25~ms.
There are no convergence guarantees or solution time bounds for the ADMM-FP heuristic, however, and a feasible solution may not be produced within the specified time limit. As such, real-world implementations must be able to handle this possibility. Mitigations for the heuristic failing to produce a solution may include the use of low-level safety filters such as control barrier functions~\cite{ames2019control}. The proposed problem formulation and heuristic may also be used in parallel with other planning approaches (e.g., sampling-based or machine learning-based), with the results only accepted when they produce lower cost plans than those of the other approaches. 

Future work that further improves the convergence properties of the ADMM-FP heuristic would be highly valuable.
Outside of the hybrid zonotope context, the proposed ADMM-FP heuristic may be useful as a primal heuristic for mixed-integer programming, or in other mixed-integer planning frameworks. 

In addition to formulating planning problems, the proposed PWA system reachability calculations may be useful in other applications such as safety verification.
In particular, by reducing the complexity of the hybrid zonotope reachable set representation when compared to MLD system reachability calculations, less frequent application of complexity reduction methods may be needed. 
By improving the tightness of the set representation, the convex relaxation is a less conservative over-approximation and may be useful for complexity reduction.

\appendix

\section{Hybrid Zonotope Union Identities}
This appendix provides three different hybrid zonotope union identities: A sharp union, a condensed union, and a union of zonotopes. Because the following identities are tangential to the main contributions of this paper, they are presented without proof. 

\subsection{Sharp Union} \label{sec:sharp-union}
\newcommand{\dims}{n}

In \cite[Proposition 2]{glunt2025sharp}, an identity for a sharp union of hybrid zonotopes is presented using set operations. Expanding the set operations to matrices produces the following identity.

\begin{proposition}
Given the hybrid zonotopes $\mathcal{Z}_1,\dots,\mathcal{Z}_N \subset \real^{\dims}$, where $\mathcal{Z}_i = \langle G_{c,i}, G_{b,i}, \mathbf{c}_i, A_{c,i}, A_{b,i}, \mathbf{b}_i \rangle_{01}$, the union is given by $\mathcal{Z}=\langle G_c, G_b, \mathbf{c}, A_c, A_b, \mathbf{b} \rangle_{01}$, where
\allowdisplaybreaks
\begin{subequations}
\label{eq:sharp_union_identity}
\begin{align}
    &G_c = \begin{bmatrix}
        G_{c,1} & 0_{\dims\times n_{G,1}}  & G_{c,2} & 0 & \cdots & G_{c,N} & 0
    \end{bmatrix} \;, \\
    &G_b = \begin{bmatrix}
        G_{b,1} & c_1 & G_{b,2} & c_2 & \cdots & G_{b,N} & c_n 
    \end{bmatrix} \;, \\
    &c = \mathbf{0} \;, \\
    &\tilde{A}_{c,i} = \begin{bmatrix}
        A_{c,i} & 0_{n_{C,i}\times n_{G,i}} \\
        \begin{bmatrix} I_{n_{Gc,i}} \\ 0_{n_{Gb,i}\times n_{Gc,i}} \end{bmatrix} & I_{n_{G,i}} \end{bmatrix}\;,\\
    &A_c = \begin{bmatrix}
        \tilde{A}_{c,1} & 0 & \dots & 0 \\
        0 & \tilde{A}_{c,2} & \dots & 0 \\
        \vdots & & \ddots & \\
        0 & 0 & \cdots & \tilde{A}_{c,N} \\
        \mathbf{0}_{2n_{Gc,1}+n_{Gb,1}}^T & \mathbf{0}^T & \cdots & \mathbf{0}^T
    \end{bmatrix} \;, \\
    &\tilde{A}_{b,i} = \begin{bmatrix}
        A_{b,i} & -\mathbf{b}_{i} \\
        \begin{bmatrix} 0_{n_{Gc,i} \times n_{Gb,i}} \\ I_{n_{Gb,i}} \end{bmatrix} & -\mathbf{1}_{n_{G,i}}
    \end{bmatrix} \;, \\
    &A_b \!=\! \begin{bNiceMatrix}
        \Block{1-2}{\tilde{A}_{b,1}} && \Block{1-2}{0} && \dots & \Block{1-2}{0} & \\
        \Block{1-2}{0} && \Block{1-2}{\tilde{A}_{b,2}} && \dots & \Block{1-2}{0} & \\
        \Block{1-2}{\vdots} && && \ddots && \\
        \Block{1-2}{0} && \Block{1-2}{0} && \cdots & \Block{1-2}{\tilde{A}_{b,N}} & \\
        \mathbf{0}_{n_{Gb,1}}^T & 1 & \mathbf{0}^T & 1 & \cdots & \mathbf{0}^T & 1
    \end{bNiceMatrix}, \\
    &\tilde{\mathbf{b}}_i = \mathbf{0}_{n_{G,i} + n_{C,i}} ,\, 
    \mathbf{b} = \begin{bmatrix} \tilde{\mathbf{b}}_1^T & \tilde{\mathbf{b}}_2^T & \cdots & \tilde{\mathbf{b}}_n^T & 1 \end{bmatrix}^T \!\!,
\end{align}
\end{subequations}
and $n_{G,i}=n_{Gc,i}+n_{Gb,i}$. 
\end{proposition}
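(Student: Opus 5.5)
We prove the set equality directly by double inclusion, working in 01-form throughout. First we write out what the constraints of $\mathcal{Z}$ say block by block. The factor vector splits into one block per $i$, and each block has four parts: $\bm{\xi}_{c,i}\in[0,1]^{n_{Gc,i}}$, slacks $\mathbf{s}_i\in[0,1]^{n_{G,i}}$, $\bm{\xi}_{b,i}\in\{0,1\}^{n_{Gb,i}}$, and a single indicator binary $\lambda_i\in\{0,1\}$. The constraints then read as follows:
\begin{align*}
&A_{c,i}\bm{\xi}_{c,i}+A_{b,i}\bm{\xi}_{b,i}=\lambda_i\mathbf{b}_i \;, \\
&\begin{bmatrix}\bm{\xi}_{c,i}\\ \bm{\xi}_{b,i}\end{bmatrix}+\mathbf{s}_i=\lambda_i\mathbf{1} \;, \\
&\textstyle\sum_i\lambda_i=1 \;.
\end{align*}
The point generated by these factors is $\mathbf{z}=\sum_i\bigl(G_{c,i}\bm{\xi}_{c,i}+G_{b,i}\bm{\xi}_{b,i}+\lambda_i\mathbf{c}_i\bigr)$.

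\emph{Inclusion $\mathcal{Z}\subseteq\bigcup_i\mathcal{Z}_i$.} Because the $\lambda_i$ are binary and sum to one, exactly one index $j$ has $\lambda_j=1$. For every $i\neq j$, the second constraint gives $\bm{\xi}_{c,i}+\mathbf{s}_{i,c}=\mathbf{0}$ with both terms nonnegative, so $\bm{\xi}_{c,i}=\mathbf{0}$ and $\mathbf{s}_i=\mathbf{0}$; the same argument forces $\bm{\xi}_{b,i}=\mathbf{0}$. Hence block $i$ contributes nothing to $\mathbf{z}$. For block $j$, the constraints reduce to $A_{c,j}\bm{\xi}_{c,j}+A_{b,j}\bm{\xi}_{b,j}=\mathbf{b}_j$, and the slack equation only re-expresses the bound $\bm{\xi}_j\le\mathbf{1}$. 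Therefore $\mathbf{z}=G_{c,j}\bm{\xi}_{c,j}+G_{b,j}\bm{\xi}_{b,j}+\mathbf{c}_j\in\mathcal{Z}_j$.

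\emph{Inclusion $\bigcup_i\mathcal{Z}_i\subseteq\mathcal{Z}$.} Take $\mathbf{z}\in\mathcal{Z}_j$ with factors $(\bm{\xi}_{c,j},\bm{\xi}_{b,j})$. Set $\lambda_j=1$ and $\lambda_i=0$ for $i\neq j$. Set $\mathbf{s}_j=\mathbf{1}-\bm{\xi}_j$, which lies in $[0,1]^{n_{G,j}}$, and set all factors in the other blocks to zero. Every constraint is then satisfied, and the generated point equals $\mathbf{z}$.

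The main obstacle is bookkeeping rather than mathematics. We need to check that the block layouts of $G_c,G_b,A_c,A_b$ match the factor ordering described above: each block of $G_b$ carries $\mathbf{c}_i$ as the coefficient of $\lambda_i$, the last row of $A_b$ encodes $\sum_i\lambda_i=1$, and $\tilde{A}_{c,i},\tilde{A}_{b,i}$ together encode $\bm{\xi}_i+\mathbf{s}_i=\lambda_i\mathbf{1}$. The key fact is that nonnegativity in 01-form turns the slack equation into the forcing constraint $\bm{\xi}_i\le\lambda_i\mathbf{1}$, which zeros out every inactive block.

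Sharpness is a separate claim, inherited from \cite[Proposition 2]{glunt2025sharp}. To argue it, we would observe that the relaxation with $\lambda\in[0,1]$ is a perspective/disjunctive (Balas-type) formulation. Any relaxed point then decomposes as $\sum_i\lambda_i\bigl(\bm{\xi}_i/\lambda_i\bigr)$, a convex combination of points of $\mathit{CR}(\mathcal{Z}_i)=\mathit{CH}(\mathcal{Z}_i)$ whenever the $\mathcal{Z}_i$ are sharp.
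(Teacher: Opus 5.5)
Your proof is correct, and your reading of the block layout matches the matrices: per block, the continuous factors are $(\bm{\xi}_{c,i},\mathbf{s}_i)$ and the binary factors are $(\bm{\xi}_{b,i},\lambda_i)$. The stray $c_n$ and $\tilde{\mathbf{b}}_n$ in the statement should read $\mathbf{c}_N$ and $\tilde{\mathbf{b}}_N$.

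Your route differs from the paper's, which gives no argument of its own. The appendix states the identity without proof, as a matrix expansion of the set-operation form of the sharp union in \cite[Proposition 2]{glunt2025sharp}, so exactness and sharpness are inherited from that reference.

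Your direct double inclusion is self-contained and checks that the expanded matrices are actually right. It also shows that exactness uses 01-form nonnegativity only to zero out inactive blocks. The same argument therefore proves the condensed union \eqref{eq:condensed-union}, whose single aggregated row $\mathbf{1}^T\bm{\xi}_i+n_{G,i}s_i=n_{G,i}\lambda_i$ likewise forces $\bm{\xi}_i=\mathbf{0}$ when $\lambda_i=0$.

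The per-coordinate rows $\bm{\xi}_i+\mathbf{s}_i=\lambda_i\mathbf{1}$ matter only for sharpness, and your perspective sketch explains this correctly. In the relaxation these rows give $\mathbf{0}\le\bm{\xi}_i\le\lambda_i\mathbf{1}$. So $\bm{\xi}_i/\lambda_i$ is a feasible factor of $\mathit{CR}(\mathcal{Z}_i)$ when $\lambda_i>0$, and $\bm{\xi}_i=\mathbf{0}$ when $\lambda_i=0$. State the decomposition for points rather than factors: $\mathbf{z}=\sum_{i:\lambda_i>0}\lambda_i\bigl(G_{c,i}\bm{\xi}_{c,i}/\lambda_i+G_{b,i}\bm{\xi}_{b,i}/\lambda_i+\mathbf{c}_i\bigr)$. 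The aggregated condensed row, by contrast, lets $\bm{\xi}_i/\lambda_i$ leave the unit box, which is why that union is not sharp in general.

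The cited route buys brevity and keeps the identity inside the set-operation calculus used throughout the paper. Yours buys independence from the reference and a clear account of why one union is sharp and the other is not.
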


The set representation complexity of the identity defined by~\eqref{eq:sharp_union_identity} is
$n_{Gc} = \sum_{i=1}^N (2n_{Gc,i}+n_{Gb,i})$, $n_{Gb} = N + \sum_{i=1}^N n_{Gb,i}$, $n_C = 1 + \sum_{i=1}^N ( n_{Gc,i}+n_{Gb,i}+n_{C,i})$.

\subsection{Condensed Union} \label{sec:condensed-union}
\begin{proposition}
The hybrid zonotope union $\bigcup_{i=1}^N \mathcal{Z}_i$, where $\mathcal{Z}_i = \langle G_{c,i}, G_{b,i}, \mathbf{c}_i, A_{c,i}, A_{b,i}, \mathbf{b}_i \rangle_{01}$, 
is given by
$\mathcal{Z} = \langle G_c, G_b, \mathbf{c}, A_c, A_b, \mathbf{b} \rangle_{01}$, where
\allowdisplaybreaks
\begin{subequations} \label{eq:condensed-union}
\begin{align}
    &G_c = \begin{bmatrix}
        G_{c,1} & \mathbf{0}  & G_{c,2} & \mathbf{0} & \cdots & G_{c,N} & \mathbf{0}
    \end{bmatrix} \;, \\
    &G_b = \begin{bmatrix}
        G_{b,1} & \mathbf{c}_1 & G_{b,2} & \mathbf{c}_2 & \cdots & G_{b,N} & \mathbf{c}_n 
    \end{bmatrix} \;, \\
    &\mathbf{c} = \mathbf{0} \;, \\
    &\tilde{A}_{c,i} = \begin{bmatrix}
        \mathbf{1}_{n_{Gc,i}}^T & n_{Gc,i}+n_{Gb,i} \\
        A_{c,i} & \mathbf{0} \end{bmatrix}\;,\\
    &A_c = \begin{bmatrix}
        \tilde{A}_{c,1} & 0 & \dots & 0 \\
        0 & \tilde{A}_{c,2} & \dots & 0 \\
        \vdots & & \ddots & \\
        0 & 0 & \cdots & \tilde{A}_{c,N} \\
        \mathbf{0}_{n_{Gc,1}+1}^T & \mathbf{0}^T & \cdots & \mathbf{0}^T
    \end{bmatrix} \;, \\
    &\tilde{A}_{b,i} = \begin{bmatrix}
        \mathbf{1}_{n_{Gb,i}}^T & -n_{Gc,i}-n_{Gb,i} \\
        A_{b,i} & -\mathbf{b}_i
    \end{bmatrix} \;, \\
    &A_b \!=\! \begin{bNiceMatrix}
        \Block{1-2}{\tilde{A}_{b,1}} && \Block{1-2}{0} && \dots & \Block{1-2}{0} & \\
        \Block{1-2}{0} && \Block{1-2}{\tilde{A}_{b,2}} && \dots & \Block{1-2}{0} & \\
        \Block{1-2}{\vdots} && && \ddots && \\
        \Block{1-2}{0} && \Block{1-2}{0} && \cdots & \Block{1-2}{\tilde{A}_{b,N}} & \\
        \mathbf{0}_{n_{Gb,1}}^T & 1 & \mathbf{0}^T & 1 & \cdots & \mathbf{0}^T & \!\!\!1
    \end{bNiceMatrix}, \\
    &\tilde{\mathbf{b}}_i = \mathbf{0}_{n_{C,i}+1}^T \;,\; \mathbf{b} = \begin{bmatrix} \tilde{\mathbf{b}}_1^T & \tilde{\mathbf{b}}_2^T & \cdots & \tilde{\mathbf{b}}_n^T & 1 \end{bmatrix}^T \;.
\end{align}
\end{subequations}
\end{proposition}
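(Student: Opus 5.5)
The plan is to prove the identity by double inclusion, working directly with the factor structure of \eqref{eq:condensed-union} in 01-form. First I would name the factors block by block. For each $i$, the continuous factors of $\mathcal{Z}$ split into $\bm{\xi}_{c,i}\in[0,1]^{n_{Gc,i}}$ and one extra slack factor $s_i\in[0,1]$, which corresponds to the zero column in $G_c$. The binary factors split into $\bm{\xi}_{b,i}\in\{0,1\}^{n_{Gb,i}}$ and one indicator $\lambda_i\in\{0,1\}$, which corresponds to the column $\mathbf{c}_i$ in $G_b$.

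Reading off the rows of $A_c$, $A_b$ and $\mathbf{b}$, and writing $n_{G,i}=n_{Gc,i}+n_{Gb,i}$, the constraints are, for each $i$,
\begin{equation*}
\mathbf{1}^T\bm{\xi}_{c,i}+\mathbf{1}^T\bm{\xi}_{b,i}+n_{G,i}\,s_i = n_{G,i}\,\lambda_i ,\qquad A_{c,i}\bm{\xi}_{c,i}+A_{b,i}\bm{\xi}_{b,i}=\mathbf{b}_i\lambda_i ,
\end{equation*}
together with $\sum_i\lambda_i=1$. Since $\mathbf{c}=\mathbf{0}$, the represented point is $\mathbf{z}=\sum_i\left(G_{c,i}\bm{\xi}_{c,i}+G_{b,i}\bm{\xi}_{b,i}+\mathbf{c}_i\lambda_i\right)$.

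For $\mathcal{Z}\subseteq\bigcup_i\mathcal{Z}_i$: the choose-one constraint with binary $\lambda_i$ forces exactly one index $j$ with $\lambda_j=1$. For every $i\neq j$, the first constraint reads $\mathbf{1}^T\bm{\xi}_{c,i}+\mathbf{1}^T\bm{\xi}_{b,i}+n_{G,i}s_i=0$. All terms are nonnegative, so all factors of block $i$ vanish and the block contributes nothing to $\mathbf{z}$. The second constraint for block $i$ then holds trivially. For block $j$, the second constraint becomes $A_{c,j}\bm{\xi}_{c,j}+A_{b,j}\bm{\xi}_{b,j}=\mathbf{b}_j$, and $\mathbf{z}=G_{c,j}\bm{\xi}_{c,j}+G_{b,j}\bm{\xi}_{b,j}+\mathbf{c}_j$. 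This is exactly a point of $\mathcal{Z}_j$.

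For $\bigcup_i\mathcal{Z}_i\subseteq\mathcal{Z}$: take $\mathbf{z}\in\mathcal{Z}_j$ with factors $(\bm{\xi}_{c,j},\bm{\xi}_{b,j})$. Set $\lambda_j=1$, set all other blocks' factors and indicators to zero, and choose
\begin{equation*}
s_j=1-\frac{\mathbf{1}^T\bm{\xi}_{c,j}+\mathbf{1}^T\bm{\xi}_{b,j}}{n_{G,j}} .
\end{equation*}
Every factor lies in $[0,1]$, so the numerator lies in $[0,n_{G,j}]$ and hence $s_j\in[0,1]$. It is then routine to check that all constraints hold and that the generator map returns $\mathbf{z}$.

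The main obstacle is the role of the slack $s_i$ and the weight $n_{G,i}$. They must be large enough that the active block is unconstrained by the added row, yet the same row must force the inactive blocks to zero. This relies essentially on the 01-form, because nonnegativity of the factors is what lets a zero sum force every term to zero; in canonical form the argument would fail. I would also treat the degenerate case $n_{G,i}=0$ (a singleton $\mathcal{Z}_i=\{\mathbf{c}_i\}$) separately. In that case the added row reduces to $0=0$, and I would either note this or adopt the convention that the slack column is dropped.
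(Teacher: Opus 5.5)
Your proof is correct and complete. The paper has no proof to compare it against: its appendix says the union identities are ``presented without proof.'' Your double-inclusion argument therefore supplies the missing verification, reading the statement's $\mathbf{c}_n$ and $\tilde{\mathbf{b}}_n$ as $\mathbf{c}_N$ and $\tilde{\mathbf{b}}_N$, as you implicitly do.

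The argument turns on two points, and you have both. First, nonnegativity of the 01-form factors forces every inactive block to zero. Second, in the active block the slack $s_j$ absorbs $n_{G,j}-\mathbf{1}^T\bm{\xi}_{c,j}-\mathbf{1}^T\bm{\xi}_{b,j}\in[0,n_{G,j}]$.

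For the degenerate case $n_{G,i}=0$ you need no convention, because the identity holds verbatim. The added row becomes $0=0$, and $s_i$ has a zero generator column. The remaining row $\mathbf{0}=\mathbf{b}_i\lambda_i$ allows $\lambda_i=1$ exactly when $\mathcal{Z}_i=\{\mathbf{c}_i\}$ is nonempty. If $\mathbf{b}_i\neq\mathbf{0}$, then $\mathcal{Z}_i=\emptyset$ rather than a singleton, and $\lambda_i=0$ is forced.

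Your argument also explains the paper's remark that the condensed union is cheaper but not sharp in general. The sharp union \eqref{eq:sharp_union_identity} imposes one row per factor of block $i$: each factor plus its own slack equals $\lambda_i$. Your single row $\mathbf{1}^T\bm{\xi}_{c,i}+\mathbf{1}^T\bm{\xi}_{b,i}+n_{G,i}s_i=n_{G,i}\lambda_i$ is the sum of those rows, with the slacks merged into one. For binary $\lambda_i$ the aggregated row is just as strong, which is exactly your nonnegativity step. Once $\lambda_i$ is relaxed to $[0,1]$, however, the row only bounds the block's factor sum by $n_{G,i}\lambda_i$ rather than each factor by $\lambda_i$. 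That is where the convex relaxation loses tightness.
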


The set representation complexity of the identity defined by~\eqref{eq:condensed-union} is given by
$n_{Gc} = N + \sum_{i=1}^N n_{Gc,i}$, $n_{Gb} = N + \sum_{i=1}^N n_{Gb,i}$, $n_{C} = N + 1 + \sum_{i=1}^N n_{C,i}$.

The condensed union does not in general preserve sharpness, as seen in Fig.~\ref{fig:two_equilibrium_convex_relaxations}, though it often has fairly strong convex relaxations and can be sharp in some special cases.

\subsection{Zonotope Union} \label{sec:zonotope-union}
Consider the union of $N$ zonotopes $\mathcal{Z}_i = \langle G_i, \mathbf{c}_i \rangle_{01}$. The generator matrix for zonotope $i$ can be written in terms of individual generators as $G_i = \begin{bmatrix} \mathbf{g}_{i,1} & \mathbf{g}_{i,2} & \cdots \end{bmatrix}$. 
We define a shared generator matrix $\tilde{G} = \begin{bmatrix} \tilde{\mathbf{g}}_1 & \tilde{\mathbf{g}}_2 & \cdots \end{bmatrix}$ such that, for each individual zonotope generator $\mathbf{g}_{i,k}$, $\exists \tilde{\mathbf{g}}_j~\text{s.t.}~\mathbf{g}_{i,k} = \tilde{\mathbf{g}}_j$. An incidence matrix $M$ is also defined such that 
\begin{equation}
    M_{ji} = \begin{cases}
        1, & \tilde{\mathbf{g}}_j \in G_i, \\
        0, & \text{otherwise}.
    \end{cases} 
\end{equation}
The number of zonotope generator matrices $G_i$ where $\tilde{\mathbf{g}}_j$ appears is denoted $\tilde{N}_j$. 
\begin{proposition} \label{prop:zono-union}
The hybrid zonotope $\mathcal{Z} = \bigcup_i^N \mathcal{Z}_i$ is given as
\begin{equation} \label{eq:zono-union-identity}
    \mathcal{Z} = \left\langle \begin{bmatrix} \tilde{G} & 0 \end{bmatrix}, C, \mathbf{0}, 
    \begin{bmatrix}
        I & \tilde{N} \\
        \mathbf{0}^T & \mathbf{0}^T
    \end{bmatrix}, \begin{bmatrix}
        -M \\
        \mathbf{1}^T
    \end{bmatrix}, \begin{bmatrix}
        \mathbf{0} \\
        1
    \end{bmatrix} \right\rangle_{01} \;,
\end{equation}
where $C = \begin{bmatrix} \mathbf{c}_1 & \mathbf{c}_2 & \cdots \end{bmatrix}$ and $\tilde{N} = \mathrm{diag}([\tilde{N}_1, \tilde{N}_2, \cdots])$.
\end{proposition}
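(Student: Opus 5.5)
The plan is to prove set equality directly, by enumerating the admissible binary factors of the hybrid zonotope in~\eqref{eq:zono-union-identity}. Write its continuous factors as $\bm{\xi}_c = [\bm{\alpha}^T~\bm{\beta}^T]^T$, where $\bm{\alpha}$ multiplies $\tilde{G}$ and $\bm{\beta}$ multiplies the zero block. Write its binary factors as $\bm{\xi}_b = \bm{\lambda} \in \{0,1\}^N$. In 01-form a point of the set is then $\mathbf{z} = \tilde{G}\bm{\alpha} + C\bm{\lambda}$, subject to
\begin{equation*}
\bm{\alpha} + \tilde{N}\bm{\beta} - M\bm{\lambda} = \mathbf{0}, \quad \mathbf{1}^T\bm{\lambda} = 1, \quad \bm{\alpha},\bm{\beta} \in [0,1]^{n_{\tilde{G}}} .
\end{equation*}
The choose-one constraint $\mathbf{1}^T\bm{\lambda}=1$ forces $\bm{\lambda} = \mathbf{e}_i$ for exactly one $i$. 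Hence $C\bm{\lambda} = \mathbf{c}_i$ and $M\bm{\lambda}$ is the $i$-th column of $M$. The set is therefore the union over $i$ of the slices $\mathcal{S}_i = \{\mathbf{c}_i + \tilde{G}\bm{\alpha} \mid \alpha_j + \tilde{N}_j \beta_j = M_{ji}~\forall j,~\bm{\alpha},\bm{\beta}\in[0,1]\}$. It then suffices to show $\mathcal{S}_i = \mathcal{Z}_i$ for each $i$.

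Fix $i$. The constraints decouple by index $j$, so I will treat each shared generator separately, using the two cases of $M_{ji}$.
\begin{itemize}
\item If $M_{ji}=0$, then $\alpha_j + \tilde{N}_j\beta_j = 0$ with $\alpha_j,\beta_j \ge 0$ and $\tilde{N}_j \ge 0$. This forces $\alpha_j = 0$, so $\tilde{\mathbf{g}}_j$ does not contribute to $\mathcal{S}_i$.
\item If $M_{ji}=1$, then $\tilde{N}_j \ge 1$ because $\tilde{\mathbf{g}}_j$ appears in $G_i$. Any $\alpha_j \in [0,1]$ is attained by setting $\beta_j = (1-\alpha_j)/\tilde{N}_j \in [0,1]$. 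Conversely, any feasible $\beta_j$ gives $\alpha_j = 1 - \tilde{N}_j\beta_j \le 1$, and $\alpha_j \in [0,1]$ holds by the box constraint. So $\alpha_j$ ranges exactly over $[0,1]$.
\end{itemize}
Combining the cases gives $\mathcal{S}_i = \{\mathbf{c}_i + \sum_{j: M_{ji}=1} \tilde{\mathbf{g}}_j \alpha_j \mid \alpha_j \in [0,1]\}$.

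The last step identifies this with $\mathcal{Z}_i = \langle G_i, \mathbf{c}_i\rangle_{01}$. By the definition of $\tilde{G}$ and $M$, the columns $\{\tilde{\mathbf{g}}_j : M_{ji}=1\}$ are exactly the generators of $G_i$. This step is the main subtlety. The identification requires the map $k \mapsto j$ from columns of $G_i$ to shared generators to be a bijection onto $\{j : M_{ji}=1\}$, which means $G_i$ must have no repeated columns. Otherwise two copies of a generator $\mathbf{g}$ would span $[0,2]\mathbf{g}$ but be represented only once. I would state this as a standing convention: repeated columns within a single $G_i$ are first merged into one generator scaled by its multiplicity, which leaves $\mathcal{Z}_i$ unchanged. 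Alternatively, they can be treated as distinct entries of $\tilde{G}$.

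Under that convention $\mathcal{S}_i = \mathcal{Z}_i$, and the union over $i$ gives $\mathcal{Z} = \bigcup_i \mathcal{Z}_i$. Both inclusions follow from the explicit factor constructions above. The scaling by $\tilde{N}$ rather than $1$ plays no role in set equality; it only matters for the convex relaxation and sharpness claim, which is not part of the statement.
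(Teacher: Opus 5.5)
The paper gives no proof of this proposition: the appendix says its union identities are ``presented without proof.'' So there is no argument in the paper to compare yours against. Your direct verification is correct and supplies the missing proof. The choose-one constraint forces $\bm{\lambda}=\mathbf{e}_i$. The remaining constraints then decouple into $\alpha_j+\tilde{N}_j\beta_j=M_{ji}$. The two cases $M_{ji}\in\{0,1\}$ show that $\alpha_j$ is either pinned to $0$ or ranges freely over $[0,1]$, which gives both inclusions.

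The convention you flag is genuinely necessary, not a technicality. For $N=1$, $G_1=[\mathbf{g}~\mathbf{g}]$, $\tilde{G}=[\mathbf{g}]$ and $M=\tilde{N}=[1]$, the identity yields $\mathbf{c}_1+[0,1]\mathbf{g}$, while $\mathcal{Z}_1=\mathbf{c}_1+[0,2]\mathbf{g}$.

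The mirror-image condition is needed too: the columns of $\tilde{G}$ must be distinct. This is consistent with the paper calling $n_{\tilde{G}}$ the number of \emph{unique} generators, and your bijection requirement assumes it implicitly. The reason is that $M$ is defined by the membership test ``$\tilde{\mathbf{g}}_j\in G_i$.'' A duplicated column of $\tilde{G}$ would therefore get $M_{ji}=1$ twice and be counted twice in every $\mathcal{S}_i$ that uses it.

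For the same reason, your alternative of listing repeated columns of $G_i$ as separate entries of $\tilde{G}$ works only if $M$ is reinterpreted. It must be an injective assignment of the columns of each $G_i$ to columns of $\tilde{G}$, not a membership test.

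Your closing remark is also slightly off: the scaling $\tilde{N}$ does not affect the convex relaxation either. The relaxation still imposes $\mathbf{1}^T\bm{\lambda}=1$, so $(M\bm{\lambda})_j\le 1\le\tilde{N}_j$ whenever $\tilde{N}_j\ge 1$. The bound $\beta_j\le 1$ is therefore inactive. With either $\tilde{N}$ or $I$, the relaxation projects to $\{0\le\alpha_j\le(M\bm{\lambda})_j\}$. That set is already the convex hull of the union, since each point is a $\bm{\lambda}$-weighted combination of points of the $\mathcal{Z}_i$.
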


Proposition~\ref{prop:zono-union} has similar structure to the vertex representation polytope union from \cite[Thm. 5]{siefert2024reachability} and generalizes an identity for representing grids as hybrid zonotopes in \cite{robbins2024efficient}.

The complexity of the zonotope union is 
\begin{equation}
    n_{Gc} = 2 n_{\tilde{G}},\;
    n_{Gb} = N,\; 
    n_C = n_{\tilde{G}} + 1 \;,
\end{equation}
where $n_{\tilde{G}}$ is the number of unique generators $\tilde{\mathbf{g}}_j$.
This identity is most efficient for cases where there are many zonotopes with shared generators. A simple example would be the case of a grid, where the generators are shared for each cell in the grid. 

From inspection of~\eqref{eq:sharp_union_identity} and \eqref{eq:zono-union-identity}, the sharp union reduces to the zonotope union when there are no shared generators between the constituent zonotopes $\mathcal{Z}_i$. The hybrid zonotope given in~\eqref{eq:zono-union-identity} is sharp.

\bibliography{bibitems}
\bibliographystyle{ieeetr}

\end{document}